\newtheorem{lemma}{Lemma}
\newtheorem{theorem}{Theorem}
\def\ls#1{{\color{black}#1}}
\title[Experience-replay Innovative Dynamics]{Experience-replay Innovative Dynamics}
\author{Tuo Zhang}
\affiliation{
  \institution{University of Birmingham}
  \city{Birmingham}
  \country{United Kingdom}}
\email{txz257@student.bham.ac.uk}
\author{Leonardo Stella}
\affiliation{
  \institution{University of Birmingham}
  \city{Birmingham}
  \country{United Kingdom}}
\email{l.stella@bham.ac.uk}
\author{Julian Barreiro Gomez}
\affiliation{
  \institution{Khalifa University}
  \city{Abu Dhabi}
  \country{United Arab Emirates}}
\email{julian.barreirogomez@ku.ac.ae}
\begin{abstract}

Despite its groundbreaking success, multi-agent reinforcement learning (MARL) still suffers from instability and nonstationarity. Replicator dynamics, the most well-known model from evolutionary game theory (EGT), provide a theoretical framework for the convergence of the trajectories to Nash equilibria and, as a result, have been used to ensure formal guarantees for MARL algorithms in stable game settings. However, they exhibit the opposite behavior in other settings, which poses the problem of finding alternatives to ensure convergence. In contrast, innovative dynamics, such as the Brown-von Neumann-Nash (BNN) or Smith, result in periodic trajectories with the potential to approximate Nash equilibria. Yet, no MARL algorithms based on these dynamics have been proposed. In response to this challenge, we develop a novel experience replay-based MARL algorithm that incorporates revision protocols as tunable hyperparameters. We demonstrate, by appropriately adjusting the revision protocols, that the behavior of our algorithm mirrors the trajectories resulting from these dynamics. Importantly, our contribution provides a framework capable of extending the theoretical guarantees of MARL algorithms beyond replicator dynamics. Finally, we corroborate our theoretical findings with empirical results.

\end{abstract}
\keywords{Evolutionary game theory, multi-agent systems, reinforcement learning.}
\newcommand{\BibTeX}{\rm B\kern-.05em{\sc i\kern-.025em b}\kern-.08em\TeX}
\begin{document}


\pagestyle{fancy}
\fancyhead{}


\maketitle 


\section{Introduction}
Multi-agent reinforcement learning (MARL) has demonstrated significant success across various domains, including games such as Go and real-time strategy games, robotic control, cyber-physical systems, finance, and sensor networks, where numerous agents interact within complex environments \cite{silver2017mastering, vinyals2019grandmaster, lillicrap2015continuous, adler2002cooperative, lee2007multiagent, cortes2004coverage}. Despite its success, MARL faces several challenges. One of the main open questions is to provide theoretical guarantees of convergence and optimality under general conditions. As such, a challenge faced by MARL algorithms is the nonstationarity induced by the change in policy of the agents while they learn concurrently. Indeed, the rewards that each agent receives are determined not only through its policy, but also through the joint policy of the other agents~\cite{zhang2021multi, yang2020overview}.

Evolutionary game theory (EGT) studies the evolution of strategic interactions in a population of decision-makers, where the fitness of a strategy increases based on the success of that strategy in a given environment~\cite{smith1973logic, smith1974theory, weibull1997evolutionary}. EGT has played a critical role in the analysis and evaluation of MARL algorithms in complex multi-agent environments. 
Formal connections between EGT and MARL dates back to late '90s \cite{borgers1997learning}, where the authors demonstrated that, with a sufficiently small learning rate, the learning trajectories of \emph{cross learning}~\cite{cross1973stochastic} -- a stateless MARL algorithm -- converge to the trajectories of \emph{replicator dynamics}, a well-studied dynamical system in EGT.  This formal link has attracted increasing interest as it allows researchers to analyse MARL and its inherently stochastic learning processes through the deterministic framework of replicator dynamics. Building on this foundational work, researchers have extended this framework to more advanced MARL algorithms, such as Q-learning~\cite{tuyls2003selection, kianercy2012dynamics} and its variants~\cite{abdallah2008multiagent, klos2010evolutionary}, as well as regret minimization algorithms~\cite{kaisers2012common}. Other works, see, e.g., \cite{tuyls2003extended, kaisers2010frequency, perolat2021poincare}, have modified existing algorithms with known underlying dynamics, proposing new methods that inherit desirable properties from their associated evolutionary dynamics. Additionally, this framework enables the construction of learning algorithms for different types of games by analyzing the associated evolutionary dynamics~\cite{hennes2009state,vrancx2008switching,hennes2010resq}, which can also support function approximation and provide theoretical grounding for deep reinforcement learning~\cite{hennes2020neural,perolat2022mastering,barfuss2019deterministic}. 

Nevertheless, the majority of these studies have concentrated solely on replicator dynamics and its variants. Replicator dynamics, however, have been proven not to converge in certain game settings. To this end, we refer to the two main families of games as identified in~\cite{hofbauer2009stable}: \emph{strictly stable games} and \emph{null-stable games}. In strictly stable games, replicator dynamics can asymptotically converge to the Nash equilibrium, whereas in null stable games, the dynamics form closed orbits around the Nash equilibrium depending on the initial conditions. A particularly important class of null-stable games is zero-sum games. As a result, approaches based on replicator dynamics often rely on time-averaging to ensure convergence to the Nash equilibrium, which has been demonstrated to hold true~\cite{sandholm2008projection, hofbauer2009time, viossat2013no}. However, this method has a significant limitation because of the cumulative nature of the time average, which affects the ability of the dynamics to adapt to changes. Specifically, when the environment changes, the policy of the agents may require exponentially long periods to adapt to the new conditions. The problem becomes even more evident where the environment is subject to continuous changes, such is the case in feedback-evolving games~\cite{weitz2016oscillating, tilman2020evolutionary, stella2022lower, stella2023impact}. Indeed, the authors in \cite{zhang2023multi} also studied the learning trajectories of MARL in such a dynamic environment. In addition, in null-stable games, where the trajectories of continuous replicator dynamics converge to closed periodic orbits, discretizing the continuous dynamics can introduce diffusion. In turn, this diffusion can cause the trajectories to approach the boundary of the simplex of the policy, leading to numerical errors. This issue becomes significant when the Nash equilibrium is near the boundary of the simplex.

To overcome the limitations of replicator dynamics,  we turn our attention to learning algorithms based on alternative dynamics. Inspired by \cite{mertikopoulos2016learning,mertikopoulos2018riemannian}, where the mapping function is modified to switch the base dynamics to projection dynamics, we focus on innovative dynamics \cite{hofbauer2011deterministic}, a family of dynamics that includes BNN \cite{brown1950solutions} and Smith \cite{smith1984stability} dynamics. Innovative dynamics, in contrast to replicator dynamics, converge to the Nash equilibrium in null-stable games \cite{hofbauer2009stable}. However, their application to learning tasks is not as straightforward as with replicator dynamics. In dynamic environments with discrete changes and stochastic processes, replicator dynamics allow multi-step sampling to remain unbiased with respect to the underlying fitness-based equation. Hence, the need for the proposed alternative dynamics, as they are immune to this property, which prevents the use of similar update mechanisms.

\textit{Contribution}. In this paper, we introduce a novel algorithm, \emph{Experience-replay Innovative Dynamics (ERID)}, based on innovative dynamics through experience-replay. We show that the learning trajectories of ERID converge to three sets of innovative dynamics if we choose the corresponding protocol factor. This enables ERID to benefit from the convergence guarantees of these dynamics in strictly stable and null-stable games. Experience-replay is a reinforcement learning mechanism used to enhance learning efficiency and stability by storing and reusing past experiences during the training process~\cite{lange2012batch}. Its basic principle involves maintaining a memory buffer that records state transitions, actions, and rewards, which are then sampled randomly during training to decorrelate consecutive experiences and smooth the learning process \cite{mnih2015human}. In our case, experience-replay is used to reduce sample variance by mixing rewards from each step with historical rewards. This mitigates the influence of the non-linear revision protocol and ensures that the algorithm aligns with the desired underlying dynamics.

The remainder of this paper is structured as follows. Section~\ref{sec:prel} introduces some preliminary definition and notation used throughout the paper. In Section~\ref{sec:main}, we present the algorithm and the main theoretical results. In Section~\ref{sec:sim}, we show a set of simulations to evaluate the algorithm and corroborate the theoretical results. Finally, we provide concluding remarks and future directions of research in Section~\ref{sec:conc}.

\begin{wraptable}{L}{3cm}
\caption{RPS payoffs.}
\label{t:payoffs}
\begin{tabular}{c|ccc}
 & R & P & S \\\toprule
R & 0 & -$v$ & 1 \\  
P & $v$ & 0 & -1 \\  
S & -1 & 1 & 0 \\  \bottomrule
\end{tabular}
\end{wraptable}

\subsection{Motivating Example} \label{ME}
A simple example that highlights the weaknesses of algorithms based on time-average replicator dynamics is the nonstationary Rock-Paper-Scissors (RPS) game. This type of game was previously employed by Hennis \emph{et al.} to demonstrate the no-regret property of the NeuRD algorithm~\cite{hennes2020neural}. Building upon this framework, we modify the standard RPS game to illustrate our point. The basic rules of the modified RPS game are identical to the traditional RPS, where the winner of each round receives a payoff of +1, while the loser incurs a payoff of -1. If both players choose the same action, the outcome is a tie and, therefore, both players receiving a payoff of 0.

To introduce nonstationarity, a scaling factor $v > 1$ is applied to one of the matchups, amplifying the rewards of that particular confrontation. The corresponding payoffs are shown in Table~\ref{t:payoffs}. 


\begin{figure}[t]
    \centering
    \includegraphics[width=\linewidth]{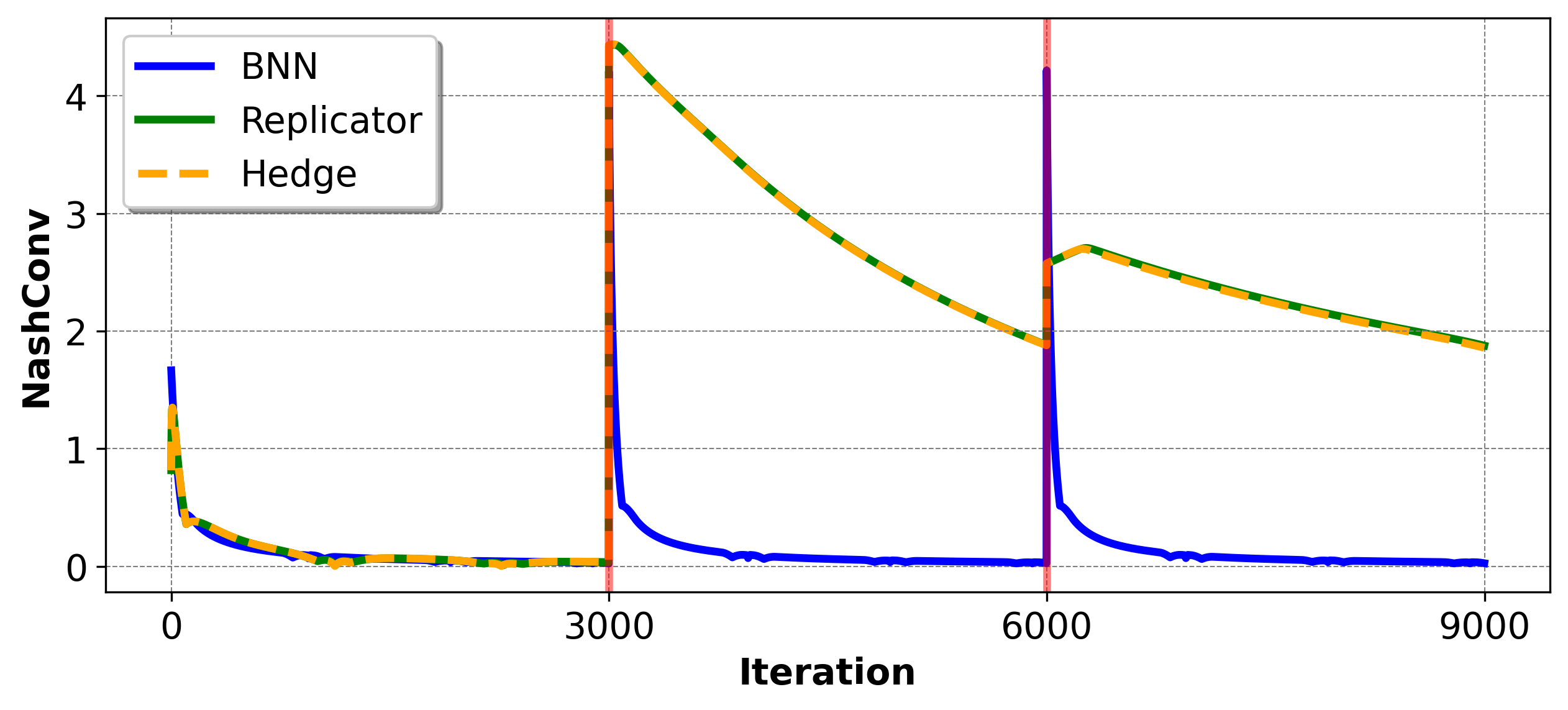}
    \caption{Policy {\sc NashConv} of BNN, Replicator and Hedge algorithms in nonstationary RPS, with the game phases every 3000 iterations separated by vertical red lines.}
    \label{fig:mot}
\end{figure}

To simplify the calculations, only one pair of matchups will be scaled in the game. The Nash equilibrium for this game is the following strategy distribution: the probability of selecting the unscaled action is \( \frac{v}{v+2} \), while the probability of selecting either of the scaled actions is \( \frac{1}{v+2} \) for each. In this example, we demonstrate the inability of replicator dynamics to adapt to dramatic changes. We set $v = 6$ and introduce a nonstationary environment. Initially, the rock-paper matchup is scaled. After 3000 time steps, the scaled matchup changes to scissors-rock, while rock-paper returns to being unscaled. After 6000 time steps, the scaled matchup shifts to paper-scissors, and scissors-rock returns to the unscaled state.

\ls{Figure~\ref{fig:mot} shows how {\sc NashConv}, a metric that measures the distance from a Nash equilibrium, changes over time for the BNN, the replicator, and the Hedge algorithms. The learning dynamics of the Hedge algorithm have been shown to follow replicator dynamics.} To highlight the shortcomings of methods based on time-averaging, two representative versions of replicator dynamics are included for comparison. \ls{At the start, we observe a fast drop in the {\sc NashConv} for all three sets of dynamics.} However, when the environment changes, BNN dynamics quickly adjust and continue reducing their Nash-Conv towards the new Nash equilibrium, while the two \ls{other two types of replicator dynamics slow down significantly. This becomes} even more evident \ls{after the second change in the environment}. The reason behind this issue \ls{can be easily explained by considering the nature of these algorithms: in the time-average dynamics, although the environment has changed}, all prior accumulated values remain centered around the previous Nash equilibrium. \ls{This results in a bias,} and simple calculations show that this bias is reduced to $1/(n+1)$ of its original size only after $n$ times more steps have passed since the change.





\section{Preliminaries}\label{sec:prel}
\ls{In this section, we first introduce preliminary definitions and notation before introducing the algorithm and theoretical results.}

\subsection{Game Theory}
\ls{In this paper, we consider two types of games: population games and a \(K\)-player normal form games. The latter is indicated by a superscript player index in square brackets, i.e., $x^{[i]}$ for a generic variable $x$.} A \emph{normal form game} (NFG) specifies the interaction of \(K\) players with corresponding action sets \(\{A^{[1]}, \ldots, A^{[K]}\}\). Each action set contains \(M \in \mathbb{N}\) actions \(A^{[k]} = \{a^{[k]}_1, \ldots, a^{[k]}_M\}\). The payoff function \(u : \prod_{k=1}^K A^{[k]} \to \mathbb{R}^K\) assigns a numerical utility to each player for each possible joint action \(\mathbf{a} = (a^{[1]}, \ldots, a^{[K]})\), where \(a^{[k]} \in A^{[k]}\) for all \(k \in \{1, \ldots, K\}\). \ls{The payoff function is derived from the corresponding entries of a payoff matrix}, denoted by \(P^{[i]}\) for each player \(i\). For any player \(i\), the reward resulting from a joint action \(\mathbf{a}\) is equal to the corresponding entry in the payoff matrix \(P^{[i]}\) \(r = \) \(P^{[i]}_{a^{[1]}, \ldots, a^{[K]}}\), where each index corresponds to the action chosen by each player. Let \(\pi^{[k]}\) denote the \(k\)-th player's mixed strategy. The expected utility for player \(k\) given strategy profile \(\pi = (\pi^{[1]}, \ldots, \pi^{[K]})\) is then \(\bar u^{[k]}(\pi) = \mathbb{E}_{\pi}[u^{[k]}(\mathbf{a}) | \mathbf{a} \sim \pi]\). The expected utility for the \(i\)-th action of player \(k\) is calculated as follows: \(u^{[k]}_i(\pi) = \mathbb{E}_{\pi}[u^{[k]}(a^{-[k]}, a^{[k]}_i)| a^{-[k]} \sim \pi^{-[k]}]\). In the notation \(a^{-[k]}\), the superscript \(-[k]\) indicates actions taken by all players except for player \(k\). \ls{A strategy profile $\pi_*^{[k]}$ is a Nash equilibrium if $\pi_*^{[k]} = BR^{[k]}(\pi_*^{-[k]})$, for all $k \in \{1,\dots,K\}$, where $BR^{[k]}(\cdot)$ denotes the best response for the $k$-th player, i.e., $BR^{[k]}(\pi^{-[k]}) := {\rm argmax}_{\pi^{-[k]}}[\bar u^{[k]}_i(\pi^{[k]},\pi^{-[k]})]$.}
\ls{To evaluate the quality of learned policies, we use the {\sc NashConv} metric~\cite{lanctot2017unified} defined as:
$$
\textsc{NashConv}(\pi) = \sum_k \bar u (BR^{[k]}(\pi^{-[k]}), \pi^{-[k]}) - \bar u^{[k]} (\pi),
$$
which can be interpreted as the distance of the learned policy $\pi$ to the Nash equilibrium. Thus, we are interested in lower values of {\sc NashConv}$(\pi)$.}

\subsection{Dynamics and Learning}
Replicator dynamics (RD) are the most well-known dynamics in EGT, taking the form of the following ODE in the single population setting: 
\begin{equation}\nonumber
\dot{x_i} = x_i[(Px)_i-x^TPx],
\end{equation}
where $x_i$ represents the frequency of strategy $i$ in the population and $P$ is the payoff matrix of the game under consideration. These dynamics describe how a population evolves over time through the fitness of certain strategies in the population. Replicator dynamics have also been used to describe the learning dynamics of multiple algorithms as discussed in the Introduction. The design of our algorithm takes inspiration from the use of RD in cross learning, a stateless reinforcement learning algorithm belonging to the category of finite action-set learning automata. This algorithm employs a policy iteration approach, starting with a random policy to facilitate exploration of the environment and learning from the actions taken. This policy is then updated in response to a reinforcement signal from the environment, allowing the agent to refine its strategy to maximise the expected reward.

At the onset of an epoch \( t \), the agent selects an action \( a(t) \) at random from the set of available actions \( \mathcal{A} \), guided by the current policy \( \pi(t) \). Following the selection of action \( a(t) \), the environment reinforces or deters the use of the same action through a reward signal \( r(t) \). The agent uses this reward \( r(t) \) to update its policy from \( \pi(t) \) to \( \pi(t+1) \), for each action \(i\), according to the following update rule:

\begin{eqnarray}
\pi_{i}(t+1) \leftarrow \pi_{i}(t) +
\begin{cases}\label{eq:policy}
    \alpha r(t)(1-\pi_{i}(t)),  & \text{if $a(t) = i$}, \\
    -\alpha r(t)\pi_{i}(t), & \text{otherwise}.
\end{cases}
\end{eqnarray} 

For completeness, we provide the following lemma, which shows the convergence of cross learning to the two-player RD. 

\begin{lemma}
Let us consider a cross learning model in a 2-player normal form game. If $\alpha \rightarrow 0$, then the trajectory of cross learning converges to the trajectory of the two-player normal form game resulting from replicator dynamics.
\end{lemma}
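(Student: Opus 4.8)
The plan is to follow the classical argument of Börgers and Sarin: take the expected change in the policy over one step of cross learning, show it equals $\alpha$ times the replicator vector field plus a higher-order term, and then invoke a standard stochastic-approximation / mean-field result to conclude that, as $\alpha\to 0$, the interpolated trajectory of the discrete process converges (on compact time intervals) to the solution of the replicator ODE with the same initial condition.

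\textbf{Key steps.} First I would fix a player $k$ and condition on the current joint policy $\pi(t)=(\pi^{[1]}(t),\pi^{[2]}(t))$. Using the update rule \eqref{eq:policy}, I would compute $\mathbb{E}[\pi^{[k]}_i(t+1)-\pi^{[k]}_i(t)\mid \pi(t)]$. The action $a^{[k]}(t)=i$ is drawn with probability $\pi^{[k]}_i(t)$, contributing $\alpha\, \mathbb{E}[r(t)\mid a^{[k]}(t)=i]\,(1-\pi^{[k]}_i(t))$, and any action $j\neq i$ is drawn with probability $\pi^{[k]}_j(t)$, contributing $-\alpha\,\mathbb{E}[r(t)\mid a^{[k]}(t)=j]\,\pi^{[k]}_i(t)$. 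Writing $u^{[k]}_i(\pi) = \mathbb{E}[r(t)\mid a^{[k]}(t)=i]$ for the expected payoff of action $i$ (the opponent's action being drawn from $\pi^{[2]}(t)$, i.e.\ $u^{[k]}_i = (P^{[k]}\pi^{[-k]})_i$), the two contributions combine, after the $\pi^{[k]}_i\sum_j\pi^{[k]}_j u^{[k]}_j$ terms telescope, to
\begin{equation}\nonumber
\mathbb{E}[\pi^{[k]}_i(t+1)-\pi^{[k]}_i(t)\mid \pi(t)] = \alpha\,\pi^{[k]}_i(t)\Bigl(u^{[k]}_i(\pi(t)) - \sum_j \pi^{[k]}_j(t)\,u^{[k]}_j(\pi(t))\Bigr),
\end{equation}
which is exactly $\alpha$ times the $i$-th component of the two-player replicator vector field. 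Next I would bound the conditional variance of the increment: since rewards are bounded (finite payoff matrix) and the increment is $O(\alpha)$ almost surely, the martingale-difference noise has second moment $O(\alpha^2)$. Then I would rescale time by $\tau = \alpha t$ and apply a standard ODE-method theorem (e.g.\ Kushner--Yin, or Benaïm's results on stochastic approximation) to conclude that the piecewise-constant interpolation converges in probability, uniformly on compact intervals $[0,T]$, to the solution of the replicator ODE — the drift is Lipschitz on the simplex because payoffs are multilinear, so the ODE has a unique solution and the approximation theorem applies.

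\textbf{Main obstacle.} The genuinely delicate point is not the drift computation but making the limiting statement precise: the discrete process lives on a fixed time grid while the ODE lives in continuous time, so one must introduce the rescaled/interpolated process and be careful about what "converges" means (convergence in probability, uniform on compacts, for every fixed horizon $T$, as $\alpha\to0$ — not almost-sure convergence on an infinite horizon). A secondary subtlety is that the policy iterates must remain in the simplex for the update to be well defined; with bounded rewards and small $\alpha$ this holds, but it should be noted. I expect the proof in the paper to either cite Börgers--Sarin directly for this lemma or to present the drift computation above and defer the limit to a standard stochastic-approximation reference.
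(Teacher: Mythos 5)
Your proposal is correct and is essentially a faithful reconstruction of the Börgers--Sarin argument; the paper itself gives no proof and simply defers to that reference, exactly as you anticipated. Both your drift computation (expected increment equals $\alpha$ times the replicator vector field) and your caveats about the time-rescaled interpolation and the mode of convergence match the cited source.
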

\begin{proof}
For brevity, we refer the reader to \cite{borgers1997learning}.
\end{proof}

\section{Experience-replay Innovative Dynamics}\label{sec:main}
In this section, we introduce \ls{the main contibution of this paper, namely,} experience-replay innovative dynamics (ERID), a stateless reinforcement learning framework \ls{based on three types of dynamics within the family of innovative dynamics.} 

\subsection{Experience Replay}
\ls{Before we present the framework, we introduce experience-replay as it constitutes the main component of our algorithm.} To implement experience-replay, suppose a buffer of size \(K \in \mathbb{N}\) is used to store \ls{samples} from the last \(K\) iterations. At time \(t\), the rewards from interactions are stored in \(\mathcal{B} = \{b_t, b_{t-1}, \ldots, b_{t-K+1}\}\). The action information is stored as sets of indices \(\mathcal{I} = \{I_i \,:\, I_i \subseteq \{t, t-1, \ldots, t-K+1\}\}_{i \in \{1, \ldots, M\}}\). Each set \(I_i\) corresponds to a particular action \(a_i\). If action \(a_i\) was taken at time step \(m\), then \(m\) is included in the set \(I_i\), and \(b_m\) is the reward associated with action \(a_i\). These sets are mutually exclusive, meaning that for any \(i \neq j\), \(I_i \cap I_j = \emptyset\). Additionally, \(\bigcup_{i=1}^{M} I_i = \{t, t-1, \ldots, t-K+1\}\) ensures a comprehensive yet distinct categorisation of action indices.

\ls{At} each time step, the agent \ls{chooses an action} and stores the \ls{corresponding action-reward tuple} in the buffer. Due to the fixed size of the buffer, \ls{when the buffer is full, the oldest data are discarded to make space for the new entries while the data in the buffer are} used to update the agent's policy. This process allows the agent to refine its policy based on past experiences, improving its performance.

\subsection{Experience Replayed Cross Learning}
In this section, we introduce our model for ERID. A critical part of this model involves calculating the average rewards associated with each action, which are used to refine the agent's policy.

\ls{In order to use the stored experiences for the policy updates, it is essential to calculate the average rewards associated with each action and the overall average reward across all actions. These two quantities are defined as:}
\begin{align}\label{eq:rewards}
    \bar{r}_i = \begin{cases}
    \displaystyle \frac{\sum\limits_{j \in I_i} b_j}{|I_i|}, & \quad {\rm if} \; I_i \neq \emptyset,\\
    0, & \quad {\rm otherwise},
    \end{cases} \quad \quad
    \displaystyle \bar{r} = \frac{\sum\limits_{j=1}^{K} b_j}{K},
\end{align}
\ls{respectively. In evolutionary} game theory, each revision protocol $\rho_{ij}$ corresponds to a specific \ls{set of evolutionary dynamics, and determines how the probabilities of choosing different strategies change} over time. In our reinforcement learning framework, we introduce the protocol factor $\eta_{ij}$, which is computed based on the average rewards \ls{as defined in~\eqref{eq:rewards}}. For each specific \ls{set of dynamics}, the corresponding $\rho_{ij}$ can be mapped to a specific $\eta_{ij}$. The specific mapping involves replacing the fitness values in \(\rho_{ij}\) with the corresponding reward values. We can now present the update formula for the policy $\pi_i(t)$:
\begin{equation}\label{eq:ERCL}
\pi_i(t+1) \leftarrow \pi_i(t) + \alpha \left( \sum_{j=1}^{M}\pi_j(t) \eta_{ji} - \pi_i(t) \sum_{j=1}^{M} \eta_{ij}\right),
\end{equation}
where $\alpha$ is the learning rate. \ls{The pseudocode of the algorithm corresponding to the above policy update is given} in Algorithm~\ref{alg:ERID}.
\color{black}

\subsection{ERID with BNN Dynamics}
BNN dynamics were first introduced by Brown and von Neumann~\cite{brown1950solutions}, who demonstrated their global convergence to equilibrium sets in certain special cases of zero-sum games. Later, in \cite{hofbauer2009stable}, two related properties were proven: zero-sum games belong to the class of null-stable games, and BNN dynamics globally converge to the equilibrium set in null-stable games.
The BNN dynamics is defined as follows:
\[
\dot{x}_i = \hat{a}_i(x) - x_i \sum_{j=1}^{n} \hat{a}_j(x),
\]
where
\[
\hat{a}_i(x) = [ (Px)_i - x^T P x ]_+,
\]
and \([\cdot]_+\) denotes the positive part.
Under the two-player normal form game setting, the dynamics turn to be:
\begin{align}
    \dot{x}_i &= [ (P^{[1]} y)_i - x^T P^{[1]} y ]_+ - x_i \sum_{j=1}^{n} [ (P^{[1]} y)_j - x^T P^{[1]} y ]_+ ,\label{eq:BNN1}\\
    \dot{y}_i &= [ (P^{[2]} x)_i - y^T P^{[2]} x ]_+ - y_i \sum_{j=1}^{n} [ (P^{[2]} x)_j - y^T P^{[2]} x ]_+ .\label{eq:BNN2}
\end{align}
To align the ERID algorithm with the BNN dynamics, we set \(\eta_{ij}\) as follows:
\begin{equation}
\eta_{ij} = \left[ \bar{r}_j - \bar{r} \right]_+.\nonumber
\end{equation}

By incorporating \(\eta_{ji}\) into the policy update equation (\ref{eq:ERCL}), we derive the following update rule:
\begin{equation}\label{eq:bnnupdate}
\pi_i(t+1) \leftarrow \pi_i(t) + \alpha \left([\bar{r}_i - \bar{r}]_+ - \pi_i(t) \sum_{j=1}^{M} [\bar{r}_j - \bar{r}]_+\right).
\end{equation}
\ls{By using equation~\eqref{eq:bnnupdate} in line 9 of Algorithm~\ref{alg:ERID}, we derive the ERID corresponding to} the BNN protocol factor. \ls{The next result shows the convergence of the ERID algorithm to the BNN dynamics when line 9 uses the policy update as defined in equation~\eqref{eq:bnnupdate}.}

\renewcommand{\algorithmicrequire}{\textbf{Input:}}
\renewcommand{\algorithmicensure}{\textbf{Output:}}
\begin{algorithm}[t]
\caption{Experience-replay Innovative Dynamics}
\label{alg:ERID}
\begin{algorithmic}[1]
\Require Initial policy $\pi_0$, buffer size $K$, learning rate $\theta$
\State \textbf{Initialize:} Policy $\pi_0 \gets$ initial strategy, buffer $B \gets$ empty
\For{$t = 1, 2, \dots$}
    \If{$t > K$}
        \State $B \gets \text{shift}(B, (a, r))$
    \Else
        \State $B \gets B \cup (a, r)$
    \EndIf
    \State $\bar{r}_i \gets \text{getAverageReward}(B, i)$
    \State $\bar{r} \gets \text{getOverallAverageReward}(B)$
    \State $\pi_t \gets \text{updatePolicy}(\pi_t, \bar{r}_i, \bar{r}, \theta)$
\EndFor
\end{algorithmic}
\end{algorithm}

\begin{theorem}\label{th:BD}
Consider a two-player normal form game, where both use ERID with BNN protocol factor update rule (\ref{eq:bnnupdate}). While $\alpha K \rightarrow 0$ and $K\rightarrow \infty$, the trajectory of the policy retrieved by ERID converges to the trajectory of the game dynamics resulting from the BNN dynamics.
\end{theorem}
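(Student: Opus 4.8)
The plan is to adapt the mean-field argument of Börgers and Sarin (the Lemma above, proving convergence of cross learning to replicator dynamics) to the ERID update~\eqref{eq:bnnupdate}. The new feature, relative to cross learning, is that the per-step signal driving the policy is no longer a single realized reward but the buffer averages $\bar r_i$ and $\bar r$ from~\eqref{eq:rewards}; the purpose of the $K\to\infty$, $\alpha K\to 0$ regime is precisely to make these averages behave like the true expected payoffs. First I would rescale discrete time by the learning rate, $\tau = \alpha t$, and define the linearly interpolated processes $\pi^{\alpha,K}(\tau)$ and $y^{\alpha,K}(\tau)$ obtained from the iterates of~\eqref{eq:bnnupdate} for the two players. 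The goal is to show that, along any sequence with $K\to\infty$ and $\alpha K\to 0$, the pair $(\pi^{\alpha,K},y^{\alpha,K})$ converges in probability, uniformly on every compact interval $[0,T]$, to the unique solution of the coupled BNN system~\eqref{eq:BNN1}--\eqref{eq:BNN2}.

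The heart of the proof is the following concentration claim: for each action $i$ and each relevant time $t$, one has $\bar r_i = (P^{[1]} y_t)_i + o(1)$ and $\bar r = x_t^\top P^{[1]} y_t + o(1)$ in probability. I would establish this by splitting $\bar r_i$ minus its target into a \emph{drift bias} and a \emph{sampling error}. For the drift bias, observe that for every index $m$ in player~1's buffer window the stored reward is the deterministic entry $P^{[1]}_{i,a^{[2]}_m}$, whose conditional mean is $(P^{[1]} y_m)_i$; since each policy moves by $O(\alpha)$ per step, $\lVert y_m - y_t\rVert = O(\alpha K)$ across the window, and the bias is therefore $O(\alpha K)\to 0$. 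For the sampling error, note that conditional on the two policy trajectories and on player~1's realized actions, the terms $\{P^{[1]}_{i,a^{[2]}_m}\}_{m\in I_i}$ are independent and uniformly bounded, so Hoeffding's inequality controls the deviation of their average by a factor $\exp(-c\,\delta^2 |I_i|)$ for some $c>0$; a Chernoff bound on the number of times action $i$ is played in the last $K$ steps gives $|I_i| \ge \tfrac12 K \inf_{\tau\le T}\pi_i^{\alpha,K}(\tau)$ with probability tending to one, so (on interior trajectories) the error vanishes as $K\to\infty$. Substituting these estimates into~\eqref{eq:bnnupdate} shows that the one-step increment of $\pi_i$ equals $\alpha\bigl(\hat a_i(x_t) - (x_t)_i \sum_j \hat a_j(x_t)\bigr) + o(\alpha)$, i.e.\ exactly the Euler increment of~\eqref{eq:BNN1} with an error that is uniformly $o(\alpha)$; the analogous statement holds for $y$ against~\eqref{eq:BNN2}.

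It then remains to turn the pointwise ODE approximation into uniform trajectory convergence. Because the map $v\mapsto[v]_+$ is $1$-Lipschitz and the remaining terms in~\eqref{eq:BNN1}--\eqref{eq:BNN2} are polynomials on the compact product of simplices, the BNN vector field is globally Lipschitz there, so a discrete Grönwall estimate applied to the accumulated increment error (together with a union bound over the $O(T/\alpha)$ steps, whose failure probability is exponentially small in $K$) yields the claimed uniform convergence; this is the standard stochastic-approximation conclusion and follows the structure of the cited cross-learning proof. The step I expect to be the main obstacle is the concentration argument near the boundary of the simplex: when some $\pi_i$ is small the window may contain few or no plays of action $i$, in which case~\eqref{eq:rewards} sets $\bar r_i = 0$ rather than returning an unbiased estimate, and because $[\bar r_i - \bar r]_+$ enters~\eqref{eq:bnnupdate} \emph{additively} (not weighted by $\pi_i$) this cannot simply be absorbed. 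Handling it cleanly will require either restricting attention to interior initial conditions whose BNN orbit stays bounded away from the boundary, or a finer two-timescale estimate controlling the excursions on which some $|I_i|$ is of smaller order than $K$.
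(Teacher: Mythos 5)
Your proposal follows essentially the same route as the paper's proof: both arguments rest on showing that, in the regime $\alpha K \to 0$ and $K \to \infty$, the buffer averages $\bar r_i$ and $\bar r$ concentrate on the expected payoffs $(P^{[1]}\pi^{[2]}(t))_i$ and $\pi^{[1]}(t)^T P^{[1]}\pi^{[2]}(t)$ (the paper bounds the per-step policy drift by $\alpha C$ so that the bias across the window is $O(\alpha K)$, and handles the sampling error by an appeal to the law of large numbers), and then pass to a continuous-time limit in the manner of the cross-learning argument of \cite{borgers1997learning}. Where you differ is in rigor rather than in strategy: the paper contents itself with a pointwise statement that the expected increment matches the BNN vector field, computing $E(\bar r_i^{[1]})$ as a ratio of expectations rather than the expectation of the ratio and invoking the LLN informally, whereas you propose Hoeffding and Chernoff bounds, a union bound over the $O(T/\alpha)$ steps, and a discrete Gr\"onwall estimate to upgrade the drift approximation to uniform convergence of trajectories on compact time intervals --- all of which would be needed for a fully complete proof. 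The obstacle you single out at the end is genuine and is not addressed by the paper: its derivation silently conditions on $\pi_i^{[1]} > 0$ and never confronts the event $I_i = \emptyset$, in which \eqref{eq:rewards} sets $\bar r_i = 0$, nor the fact that $[\bar r_i - \bar r]_+$ enters the BNN update additively rather than weighted by $\pi_i$, so a poor estimate for a rarely played action is not automatically damped the way it is in cross learning. Your plan is therefore sound, and if carried out with the interior restriction (or the finer two-timescale control of the excursions where $|I_i| = o(K)$) that you mention, it would yield a stronger and more carefully qualified statement than the one the paper actually establishes.
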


\begin{proof}
    Let \(R^{[i]}(a^{[1]}, a^{[2]})\) denote the reward function of the \(i\)-th agent. If \(\pi_i^{[1]} > 0\), the expected average reward of action \(i\) for agent 1 can be derived as:
\begin{align}
E(\bar{r}^{[1]}_i) &= \frac{\sum_{j \in I^{[1]}_i} E(b^{[1]}_j)}{E(|I^{[1]}_i|)} \nonumber \\
&= \frac{\sum_{j=0}^{K-1} \pi^{[1]}_i(t-j)\left(\sum_{m=1}^{M^{[2]}}\pi^{[2]}_m(t-j)R^{[1]}(a^{[1]}_i,a^{[2]}_m)\right)}{\sum_{j=0}^{K-1} \pi^{[1]}_i(t-j)}. \label{eq:expected_value}
\end{align}

We use \(\delta\) to denote the change in an agent's policy:
\begin{align}
\alpha\delta^{[i]}_m(t) &= \pi^{[i]}_m(t+1) - \pi^{[i]}_m(t), \nonumber \\
\delta^{[i]}_m(t) &= [\bar{r}^{[i]}_m - \bar{r}^{[i]}]_+ - \pi^{[i]}_m(t) \sum_{j=1}^{M} [\bar{r}^{[i]}_j - \bar{r}^{[i]}]_+. \nonumber
\end{align}

Given the settings of a normal form game, the reward function \(R\) is bounded, hence there exist a maximum \(R_{\text{max}}^{[i]}\) and a minimum \(R_{\text{min}}^{[i]}\) values, where
\(R_{\text{max}}^{[i]} = \max_{m \in M^{[1]}, n \in M^{[2]}} R^{[i]}(a_m^{[1]}, a_n^{[2]})\) and
\(R_{\text{min}}^{[i]} = \min_{m \in M^{[1]}, n \in M^{[2]}} R^{[i]}(a_m^{[1]}, a_n^{[2]})\). Additionally, the policy \(\pi\) values lie between 0 and 1. Therefore, we have:
\begin{equation}
|\delta^{[i]}| < M^{[i]}(R_{\text{max}}^{[i]} - R_{\text{min}}^{[i]}) < C ,\nonumber
\end{equation}
where \(C\) is a constant representing this bound.

Then, using \(\delta\) and \(\pi^{[i]}_m(t)\), we can represent \(\pi^{[i]}_m(t-j)\) as:
\begin{align*}
    \pi^{[i]}_m(t-j) &= \pi^{[i]}_m(t) - \alpha\sum_{l=1}^{j}\delta_m^{[i]}(t-l), \\
    \pi^{[i]}_m(t) - &\pi^{[i]}_m(t-j) = \alpha\sum_{l=1}^{j}\delta_m^{[i]}(t-l), \\
    |\pi^{[i]}_m(t) - &\pi^{[i]}_m(t-j)| \leq j\alpha C.
\end{align*} 

If we consider a sufficiently small learning rate \(\alpha\), it follows that:
\begin{align*}
    \lim_{j\alpha\rightarrow0}|\pi^{[i]}_m(t) - \pi^{[i]}_m(t-j)| &= 0.
\end{align*}

Then the equation (\ref{eq:expected_value}) can be derived as:
\begin{align}
E(\bar{r}^{[1]}_i) &=\frac{\sum_{j=0}^{K-1} \pi^{[1]}_i(t-j)\left(\sum_{m=1}^{M^{[2]}}\pi^{[2]}_m(t-j)R^{[1]}(a^{[1]}_i,a^{[2]}_m)\right)}{\sum_{j=0}^{K-1} \pi^{[1]}_i(t-j)}, \nonumber \\
\lim_{K\alpha\rightarrow0}E(\bar{r}^{[1]}_i) &= \frac{K \pi^{[1]}_i(t) \sum_{m=1}^{M^{[2]}}\pi^{[2]}_m(t)R^{[1]}(a^{[1]}_i,a^{[2]}_m)}{K\pi^{[1]}_i(t)}, \nonumber \\
\lim_{K\alpha\rightarrow0}E(\bar{r}^{[1]}_i) &= \sum_{m=1}^{M^{[2]}}\pi^{[2]}_m(t)R^{[1]}(a^{[1]}_i,a^{[2]}_m). \nonumber
\end{align}

Analogous proofs can be used for \(\bar{r}^{[1]}, \bar{r}_i^{[2]}, \bar{r}^{[2]}\). We have:
\begin{align}
\lim_{K\alpha\rightarrow0}E(\bar{r}^{[2]}_i) &= \sum_{m=1}^{M^{[1]}}\pi^{[1]}_m(t)R^{[2]}(a^{[1]}_m,a^{[2]}_i), \nonumber \\
\lim_{K\alpha\rightarrow0}E(\bar{r}^{[1]}) &= \sum_{m=1}^{M^{[1]}}\sum_{n=1}^{M^{[2]}}\pi^{[1]}_m(t)\pi^{[2]}_n(t)R^{[1]}(a^{[1]}_m,a^{[2]}_n), \nonumber \\
\lim_{K\alpha\rightarrow0}E(\bar{r}^{[2]}) &= \sum_{m=1}^{M^{[1]}}\sum_{n=1}^{M^{[2]}}\pi^{[1]}_m(t)\pi^{[2]}_n(t)R^{[2]}(a^{[1]}_m,a^{[2]}_n). \nonumber
\end{align}

\ls{By} the law of large numbers, as \(K \rightarrow \infty\), \(\bar{r}_i^{[1]}\) converges to its expected value, \(E(\bar{r}_i^{[1]})\). Similarly, \(\bar{r}_i^{[2]}\), \(\bar{r}^{[1]}\), and \(\bar{r}^{[2]}\) also converge to their respective expected values, \(E(\bar{r}_i^{[2]})\), \(E(\bar{r}^{[1]})\), and \(E(\bar{r}^{[2]})\). Under the conditions \(K\alpha\rightarrow 0\) and \(K\rightarrow \infty\), the difference between \(\pi^{[1]}_i(t+1)\) and \(\pi^{[1]}_i(t)\) can be derived as:

\begin{align}
\delta^{[1]}_i(t) &= \left[ \bar{r}^{[1]}_i - \bar{r}^{[1]} \right]_+ 
- \pi^{[1]}_i(t) \sum_{j=1}^{M^{[1]}} \left[ \bar{r}^{[1]}_j - \bar{r}^{[1]} \right]_+ \nonumber\\
&= \Bigg[ \sum_{m=1}^{M^{[2]}} \pi^{[2]}_m(t) R^{[1]}(a^{[1]}_i, a^{[2]}_m) \nonumber\\
&\quad - \sum_{m=1}^{M^{[1]}} \sum_{n=1}^{M^{[2]}} \pi^{[1]}_m(t) \pi^{[2]}_n(t) R^{[1]}(a^{[1]}_m, a^{[2]}_n) \Bigg]_+ \nonumber\\
&\quad - \pi^{[1]}_i(t) \sum_{j=1}^{M^{[1]}} \Bigg[ \sum_{m=1}^{M^{[2]}} \pi^{[2]}_m(t) R^{[1]}(a^{[1]}_j, a^{[2]}_m) \nonumber\\
&\quad\quad - \sum_{m=1}^{M^{[1]}} \sum_{n=1}^{M^{[2]}} \pi^{[1]}_m(t) \pi^{[2]}_n(t) R^{[1]}(a^{[1]}_m, a^{[2]}_n) \Bigg]_+.
\end{align}

In the context of a normal form game, the reward function \(R^{[i]}(a^{[1]}_m, a^{[2]}_n)\) can be considered as the entry of the corresponding payoff matrix:
\[
R^{[i]}(a^{[1]}_m, a^{[2]}_n) = p^{[i]}_{mn}.
\]
Thus, the update rule for \(\delta^{[1]}_i(t)\) simplifies to:
\begin{align}
\delta^{[1]}_i(t) &= \left[(P^{[1]}\pi^{[2]}(t))_i - \pi^{[1]}(t)^T P^{[1]}\pi^{[2]}(t)\right]_+\nonumber\\ -\pi^{[1]}&_i(t)\sum_{j=1}^{M^{[1]}}\left[(P^{[1]}\pi^{[2]}(t))_j - \pi^{[1]}(t)^T P^{[1]}\pi^{[2]}(t)\right]_+.\nonumber
\end{align}

To derive a continuous-time limit of the learning model, we assume that the time interval between two repetitions of the game is represented by a parameter $\theta$, where $0 < \theta \leq 1$. After each repetition of the game, players adjust their states by a factor proportional to $\theta$. The key assumption is that the rate at which players adjust their states decreases in proportion to the shrinking time interval between repetitions, \ls{in a similar manner to the} method used in \cite{borgers1997learning}. \ls{Formally}, it is expressed as:
\[
\pi(t+\theta) = \pi(t) + \theta \delta(t)\nonumber,
\]
where $\pi(t)$ represents the state of the player at time $t$, and $\delta(t)$ is the adjustment made to the player's state at time $t$.

Under this assumption, we can consider the discrete update learning algorithm as a continuous function and derive \ls{the corresponding differential equation as}:
\[
\lim_{\theta \to 0} \frac{\pi(t+\theta) - \pi(t)}{\theta} = \delta(t).
\]

Therefore:
\begin{align}
\frac{d\pi^{[1]}_i(t)}{dt} &= \delta^{[1]}_i(t) \nonumber\\
&= \left[(P^{[1]}\pi^{[2]}(t))_i - \pi^{[1]}(t)^T P^{[1]}\pi^{[2]}(t)\right]_+ - \nonumber\\
\pi^{[1]}&_i(t)\sum_{j=1}^{M^{[1]}}\left[(P^{[1]}\pi^{[2]}(t))_j - \pi^{[1]}(t)^T P^{[1]}\pi^{[2]}(t)\right]_+.\label{eq:BNNL1}
\end{align}
Under the same game settings, if we consider the policies \(\pi^{[1]}\) and \(\pi^{[2]}\) to be analogous to the strategy proportions \(x\) and \(y\), then equation (\ref{eq:BNNL1}) is equivalent to equation (\ref{eq:BNN1}).

Using a similar method, we can derive:
\begin{align}
\frac{d\pi^{[2]}_i(t)}{dt} &= \delta^{[2]}_i(t) \nonumber\\
&= \left[(P^{[2]}\pi^{[1]}(t))_i - \pi^{[2]}(t)^T P^{[2]}\pi^{[1]}(t)\right]_+ - \nonumber\\
\pi^{[2]}&_i(t)\sum_{j=1}^{M^{[2]}}\left[(P^{[2]}\pi^{[1]}(t))_j - \pi^{[2]}(t)^T P^{[2]}\pi^{[1]}(t)\right]_+.\label{eq:BNNL2}
\end{align}

Similarly, under the same conditions, there is also an equivalence between equation (\ref{eq:BNNL2}) and Equation (\ref{eq:BNN2}).
 This shows that the dynamics of ERID under the conditions of \(\alpha K \rightarrow 0\) and \(K \rightarrow \infty\) are equivalent to the BNN dynamics. This concludes the proof.
 \end{proof}

\subsection{ERID with Smith Dynamics}
Smith dynamics, introduced by Smith \cite{smith1984stability} in the transportation literature, is a dynamic model specifically designed for congestion games. \ls{A proof of global stability for these dynamics to the set of equilibria in null-stable games is given in~\cite{smith1984stability}}. Since the revision protocol in Smith dynamics involves comparing every pair of strategies, it falls under the category of pairwise comparison dynamics. Due to this pairwise comparison characteristic, Smith dynamics tends to react more quickly to changes in dynamic targets compared to BNN dynamics; however, it also exhibits greater fluctuations near \ls{the} equilibrium. Smith dynamics \ls{are} defined as follows:
\begin{align}\nonumber
\dot{x}_i = \sum_{j=1}^{n} x_j [(Px)_i - (Px)_j]_+ - x_i \sum_{j=1}^{n} [(Px)_j - (Px)_i]_+. 
\end{align}

\ls{To account for the above protocol in our algorithm}, we set:

\begin{equation}\nonumber
\eta_{ij} = \left[ \bar{r}_j - \bar{r}_i \right]_+.
\end{equation}

By \ls{substituting the above protocol factor in place of} $\eta_{ji}$ into the policy update equation~(\ref{eq:ERCL}), we derive the following update rule:
\begin{equation}
\pi_i(t+1) \leftarrow \pi_i(t) + \alpha \left( \sum_{j=1}^{M}\pi_j(t) \left[ \bar{r}_i - \bar{r}_j \right]_+ - \pi_i(t) \sum_{j=1}^{M} \left[ \bar{r}_j - \bar{r}_i \right]_+\right).\label{eq:ERCLPC}
\end{equation}
\ls{In a similar manner to the BNN dynamics, by using the above protocol factor in line 9 of Algorithm~\ref{alg:ERID}, we derive the corresponding algorithm with the Smith protocol factor, and establish the following result}.
\begin{theorem}\label{th:PC}
Consider a normal form game with two players, both of them use ERID with Smith update rule~\eqref{eq:ERCLPC}. If $\alpha K \rightarrow 0$ and $K\rightarrow \infty$, then the trajectory of the policy retrieved by ERID converges to the trajectory of the game dynamics resulting from the Smith dynamics.
\end{theorem}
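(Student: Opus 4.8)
The plan is to follow verbatim the structure of the proof of Theorem~\ref{th:BD}, changing only the protocol factor from the BNN form $[\bar r_j - \bar r]_+$ to the Smith (pairwise comparison) form $\eta_{ij} = [\bar r_j - \bar r_i]_+$. First I would record the expected per-action rewards exactly as in~\eqref{eq:expected_value}: for agent~1, whenever $\pi^{[1]}_i>0$,
\[
E(\bar r^{[1]}_i) = \frac{\sum_{j=0}^{K-1}\pi^{[1]}_i(t-j)\big(\sum_{m}\pi^{[2]}_m(t-j)R^{[1]}(a^{[1]}_i,a^{[2]}_m)\big)}{\sum_{j=0}^{K-1}\pi^{[1]}_i(t-j)},
\]
with the analogous expressions for $\bar r^{[1]}$, $\bar r^{[2]}_i$, $\bar r^{[2]}$. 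The only structural change is the one-step increment, which now reads
\[
\delta^{[i]}_m(t) = \sum_{j=1}^{M}\pi^{[i]}_j(t)\,[\bar r^{[i]}_m - \bar r^{[i]}_j]_+ - \pi^{[i]}_m(t)\sum_{j=1}^{M}[\bar r^{[i]}_j - \bar r^{[i]}_m]_+ .
\]

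Next I would reuse the uniform bound argument: since the payoff entries are bounded and the policy coordinates lie in $[0,1]$, the Smith increment (a bounded, nonnegative function of differences of bounded rewards) still satisfies $|\delta^{[i]}| < M^{[i]}(R^{[i]}_{\max}-R^{[i]}_{\min}) < C$ for a constant $C$. Hence $|\pi^{[i]}_m(t)-\pi^{[i]}_m(t-j)| \le j\alpha C$, so under $\alpha K\to 0$ every sample in the buffer is generated by essentially the current policy. Combining this with the law of large numbers as $K\to\infty$, exactly as in Theorem~\ref{th:BD}, gives $\bar r^{[1]}_i \to (P^{[1]}\pi^{[2]}(t))_i$, $\bar r^{[1]} \to \pi^{[1]}(t)^T P^{[1]}\pi^{[2]}(t)$, and the symmetric statements for agent~2; in particular all the per-action averages $\bar r^{[i]}_j$ converge to the corresponding expected-payoff coordinates.

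I would then substitute these limits into the Smith update~\eqref{eq:ERCLPC}, using the continuity (indeed $1$-Lipschitzness) of $[\cdot]_+$ to pass the limit inside each positive part, obtaining
\[
\delta^{[1]}_i(t) = \sum_{j=1}^{M^{[1]}}\pi^{[1]}_j(t)\,[(P^{[1]}\pi^{[2]}(t))_i - (P^{[1]}\pi^{[2]}(t))_j]_+ - \pi^{[1]}_i(t)\sum_{j=1}^{M^{[1]}}[(P^{[1]}\pi^{[2]}(t))_j - (P^{[1]}\pi^{[2]}(t))_i]_+,
\]
and the analogous identity for $\delta^{[2]}_i(t)$ with $P^{[2]}$ and the roles of $\pi^{[1]},\pi^{[2]}$ interchanged. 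Finally, introducing the vanishing inter-play interval $\theta$ with $\pi(t+\theta) = \pi(t) + \theta\,\delta(t)$ and letting $\theta\to 0$ yields $\dot\pi^{[1]}_i = \delta^{[1]}_i(t)$, which under the identifications $x\leftrightarrow\pi^{[1]}$, $y\leftrightarrow\pi^{[2]}$, $(Px)_i\leftrightarrow(P^{[1]}\pi^{[2]})_i$ is precisely the two-player Smith dynamics; the second player's equation matches in the same way, which concludes the argument.

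The main obstacle — identical in spirit to Theorem~\ref{th:BD} — is the joint control of buffer staleness and sampling noise: one must argue that, in the coupled limit $\alpha K\to 0$, $K\to\infty$, the $K$ historical samples simultaneously (i) behave as unbiased estimators of the \emph{current} expected payoffs and (ii) concentrate around those expectations. The wrinkle specific to the Smith case is that the increment couples every ordered pair of actions through $[\bar r_m - \bar r_j]_+$ rather than comparing each action to the single aggregate $\bar r$, so the write-up should note that finitely many almost-sure limits (one per action) can be taken jointly and that the uniform bound $C$ and the $1$-Lipschitz continuity of $[\cdot]_+$ make the pairwise structure harmless; no genuinely new estimate is needed, but this is the step that deserves the most careful statement.
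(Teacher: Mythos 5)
Your proposal is correct and follows exactly the route the paper intends: the paper omits the proof of Theorem~\ref{th:PC}, stating only that it mirrors the proof of Theorem~\ref{th:BD} with the BNN protocol factor replaced by the Smith one, and your write-up carries out precisely that substitution (expected-reward limits under $\alpha K \to 0$ and $K \to \infty$, the uniform bound on the increment, continuity of $[\cdot]_+$, and the continuous-time limit). No discrepancy with the paper's argument.
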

The proof of Theorem~\ref{th:PC} is very similar to that of Theorem~\ref{th:BD}. It only requires substituting Equation~\ref{eq:bnnupdate} with Equation~\ref{eq:ERCLPC}. For the sake of brevity, the proof is omitted here.

\subsection{ERID with Smith-replicator-based Pairwise Dynamics}
\ls{Finally, we show that our algorithm} is also compatible with dynamics defined by \ls{other} revision protocols, such as the Smith-replicator-based pairwise dynamics \cite{barreiro2015decentralized}. 
\ls{This model} was developed for environments where constraints are imposed on the proportion of strategies. Assuming $\underline{x}_i$ and $\overline{x}_i$ represent the lower and upper bounds on $x_i$, respectively, the Smith-replicator-based pairwise dynamics can be expressed as follows:

\begin{align*}
\dot{x}_i = \sum_{j=1}^{n} (x_j - \underline{x}_j)(\overline{x}_i - x_i)[(Px)_i - (Px)_j]_+ \\ - \sum_{j=1}^{n} (x_i - \underline{x}_i)(\overline{x}_j - x_j)[(Px)_j - (Px)_i]_+.
\end{align*}

In the context of MARL, these constraints are applied to the frequency of actions selected \ls{by the agent as part of their policy. To integrate our algorithm with the Smith-replicator-based pairwise dynamics, we define the protocol factor $\eta_{ij}$ as}:
\begin{equation}
    \eta_{ij} = \frac{1}{\pi_i}(\pi_j - \underline{\pi}_j)(\overline{\pi}_i - \pi_i)\left[ \bar{r}_j - \bar{r}_i \right]_+.
\end{equation}

By incorporating \(\eta_{ji}\) into the policy update equation (\ref{eq:ERCL}), we derive the following update rule:
\begin{align}\label{eq:ERCLSP}
\pi_i(t+1) &\leftarrow \pi_i(t) + \alpha \Big( \sum_{j=1}^{M}(\pi_j(t) - \underline{\pi}_j)(\overline{\pi}_i - \pi_i(t)) \left[ \bar{r}_i - \bar{r}_j \right]_+ \nonumber\\&- (\pi_i(t) - \underline{\pi}_i)(\overline{\pi}_j - \pi_j(t)) \left[ \bar{r}_j - \bar{r}_i \right]_+ \Big).
\end{align}
\ls{In the following result, we prove that the trajectory of the policy of our algorithm converges to the Smith-replicator-based pairwise dynamics when equation~\eqref{eq:ERCLSP} is substituted in line 9 of Algorithm~\ref{alg:ERID}}.
\begin{theorem}\label{th:SP}
Consider a normal form game with two players, both of them use ERID with Smith-replicator-based pairwise revision protocol update rule (\ref{eq:ERCLSP}). If $\alpha K \rightarrow 0$ and $K\rightarrow \infty$, then the trajectory of the policy retrieved by ERID converges to the trajectory of the game dynamics resulting from the Smith-replicator-based pairwise dynamics.
\end{theorem}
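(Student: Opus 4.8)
The plan is to follow the proof of Theorem~\ref{th:BD} almost verbatim, since ERID accumulates action-reward pairs in the buffer in exactly the same way regardless of which revision protocol is plugged into line~9 of Algorithm~\ref{alg:ERID}; only the protocol-dependent algebra and the boundedness bookkeeping change.

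First I would note that equation~\eqref{eq:expected_value}, the expression for the expected per-action average reward $E(\bar r^{[1]}_i)$, is protocol-independent and therefore still holds. To reuse it I need the per-step policy increment $\delta^{[i]}$ to be uniformly bounded by some constant $C$; this is the one nontrivial point, because the Smith-replicator-based protocol factor $\eta_{ij}=\frac{1}{\pi_i}(\pi_j-\underline{\pi}_j)(\overline{\pi}_i-\pi_i)[\bar r_j-\bar r_i]_+$ carries a $1/\pi_i$ that looks singular near the boundary of the simplex. I would resolve this by observing that when $\eta_{ji}$ and $\eta_{ij}$ are substituted into~\eqref{eq:ERCL}, the weight $\pi_j(t)$ multiplying $\eta_{ji}$ in $\sum_j\pi_j(t)\eta_{ji}$ cancels the $1/\pi_j$ of $\eta_{ji}$, and likewise the factor $\pi_i(t)$ in $\pi_i(t)\sum_j\eta_{ij}$ cancels the $1/\pi_i$ of $\eta_{ij}$; after these cancellations $\delta^{[i]}$ -- as already displayed in~\eqref{eq:ERCLSP} -- is a polynomial in the bounded quantities $\pi_j,\underline{\pi}_j,\overline{\pi}_j\in[0,1]$ and $[\bar r_i-\bar r_j]_+\le R^{[i]}_{\max}-R^{[i]}_{\min}$, so $|\delta^{[i]}|<C$. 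With $C$ in hand, the chain of estimates of Theorem~\ref{th:BD} goes through unchanged: $|\pi^{[i]}_m(t)-\pi^{[i]}_m(t-j)|\le j\alpha C$, which vanishes since $j\le K-1$ and $\alpha K\to0$; hence $\lim_{K\alpha\to0}E(\bar r^{[1]}_i)=(P^{[1]}\pi^{[2]}(t))_i$ and, by the law of large numbers as $K\to\infty$, $\bar r^{[1]}_i\to(P^{[1]}\pi^{[2]}(t))_i$, with the analogous limits for $\bar r^{[2]}_i$, using $R^{[i]}(a^{[1]}_m,a^{[2]}_n)=p^{[i]}_{mn}$.

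Next I would substitute these limits into the update rule~\eqref{eq:ERCLSP}. The factors $(\pi_j(t)-\underline{\pi}_j)$ and $(\overline{\pi}_i-\pi_i(t))$ are evaluated at the current step $t$ and involve no buffer average, so they are continuous bounded functions of $\pi(t)$ alone and pass through both the $K\to\infty$ and the subsequent continuous-time limit untouched; only the bracketed reward differences get replaced, which yields
\begin{align*}
\delta^{[1]}_i(t) &= \sum_{j=1}^{M^{[1]}}(\pi^{[1]}_j(t)-\underline{\pi}_j)(\overline{\pi}_i-\pi^{[1]}_i(t))\,[(P^{[1]}\pi^{[2]}(t))_i-(P^{[1]}\pi^{[2]}(t))_j]_+ \\
&\quad-\sum_{j=1}^{M^{[1]}}(\pi^{[1]}_i(t)-\underline{\pi}_i)(\overline{\pi}_j-\pi^{[1]}_j(t))\,[(P^{[1]}\pi^{[2]}(t))_j-(P^{[1]}\pi^{[2]}(t))_i]_+,
\end{align*}
and, symmetrically, the corresponding expression for $\delta^{[2]}_i(t)$. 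Finally I would introduce the time-rescaling $\pi(t+\theta)=\pi(t)+\theta\delta(t)$ and let $\theta\to0$, exactly as in Theorem~\ref{th:BD}, obtaining $\dot\pi^{[1]}_i(t)=\delta^{[1]}_i(t)$ and $\dot\pi^{[2]}_i(t)=\delta^{[2]}_i(t)$; identifying $\pi^{[1]},\pi^{[2]}$ with the strategy proportions and $(Px)_i$ with $(P^{[1]}\pi^{[2]}(t))_i$ for player~1 (resp.\ $(P^{[2]}\pi^{[1]}(t))_i$ for player~2) shows these are precisely the Smith-replicator-based pairwise dynamics.

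The hard part, such as it is, is the uniform boundedness of $\delta^{[i]}$: making sure the $1/\pi_i$ in $\eta_{ij}$ cannot destabilise the increment near the boundary of the simplex. The observation that makes the whole construction well posed is that this factor is always cancelled by the matching $\pi$-weight in~\eqref{eq:ERCL} before it can do any harm; once this is in place, the remainder is a line-by-line repetition of the proof of Theorem~\ref{th:BD} with the new protocol factor, so I would omit it for brevity in the same way Theorem~\ref{th:PC} is handled.
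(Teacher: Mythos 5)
Your proposal is correct and follows exactly the route the paper intends: the paper omits this proof as a line-by-line analogue of the proof of Theorem~\ref{th:BD} with update rule~\eqref{eq:ERCLSP} substituted, which is precisely what you carry out. Your explicit check that the $1/\pi_i$ factor in $\eta_{ij}$ is cancelled by the matching $\pi$-weights in~\eqref{eq:ERCL}, so that $\delta^{[i]}$ remains uniformly bounded near the boundary of the simplex, is a worthwhile addition that the paper leaves implicit.
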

For the same reasons as in Theorem~\ref{th:PC}, the proof of Theorem~\ref{th:SP} is omitted.

\section{Evaluation}\label{sec:sim}
In this section, we design two sets of experiments. The first set aims to validate the theoretical results presented in Section~\ref{sec:main}, using the two-player matching pennies game and the biased RPS game. The second set is designed to compare ERID with algorithms based on replicator dynamics, utilizing the non-stationary RPS game.


\subsection{Comparison Dynamics with ERID}
For the evolutionary dynamic trajectories, we use a fixed payoff matrix, where the payoff of strategies within one population is influenced by the real-time proportions of strategies employed by the \ls{other} population. In the analysis of learning dynamics, two agents equipped with identical learning algorithms \ls{play} against each other, with each agent's policy recorded at every timestep. These trajectories are then depicted and compared to illustrate the effectiveness of ERID in capturing the dynamics of the game.

\textbf{Matching Pennies}. Matching Pennies is a classic two-player zero-sum game. Each player chooses to display either heads or tails simultaneously without knowledge of the \ls{other player's} choice. The objective is simple: if both players reveal the same side, Player 1 wins; if the sides differ, Player 2 wins. The \ls{payoff matrix of each player is as follows:}

\begin{equation}
A = \begin{pmatrix}
      1 & -1 \\
      -1 & 1
  \end{pmatrix}
\quad
B = \begin{pmatrix}
      -1 & 1 \\
      1 & -1
  \end{pmatrix}
\label{eq:MPpay}
\end{equation}

The Nash Equilibrium in the Matching Pennies game occurs when both players randomise their choices equally between heads and tails, each with a probability of 50\%.  In Figure~\ref{fig:combined}, \ls{we set} five distinct initial \ls{conditions, namely,} (0.5, 0.6), (0.2, 0.2), (0.3, 0.7), (0.8, 0.2), and (0.9, 0.9). The color gradients along the trajectories represent the progression of time, with matching colours on both trajectories indicating the strategies adopted by the players at the same \ls{timestep}.

\begin{figure}[t]
    \centering
    \begin{subfigure}[b]{0.5\linewidth}
        \includegraphics[width=\linewidth]{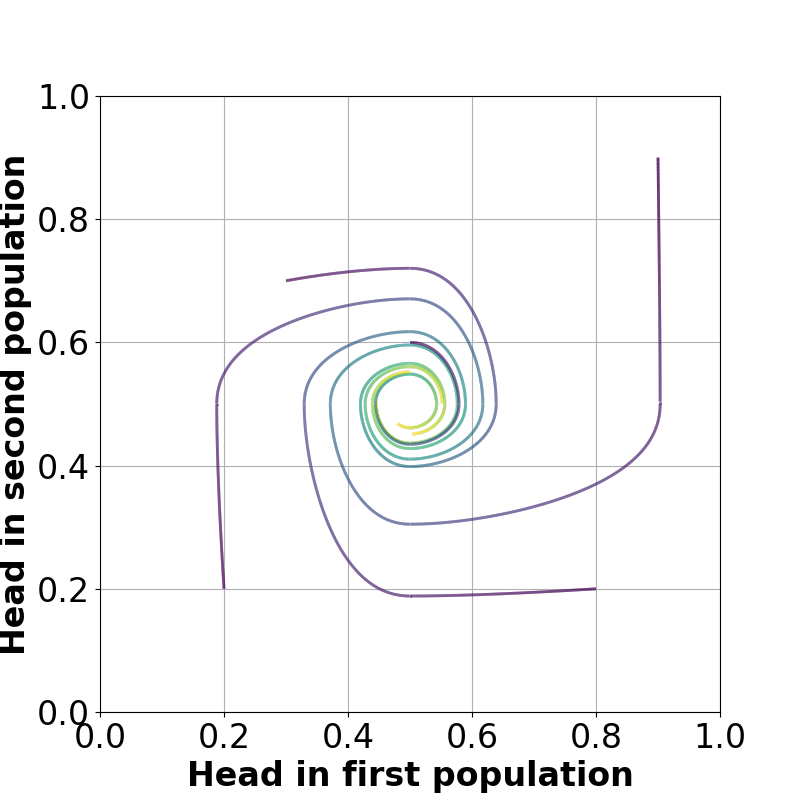}
        \label{fig:fig1a}
    \end{subfigure}
    \hspace{-0.02\linewidth} 
    \begin{subfigure}[b]{0.5\linewidth}
        \includegraphics[width=\linewidth]{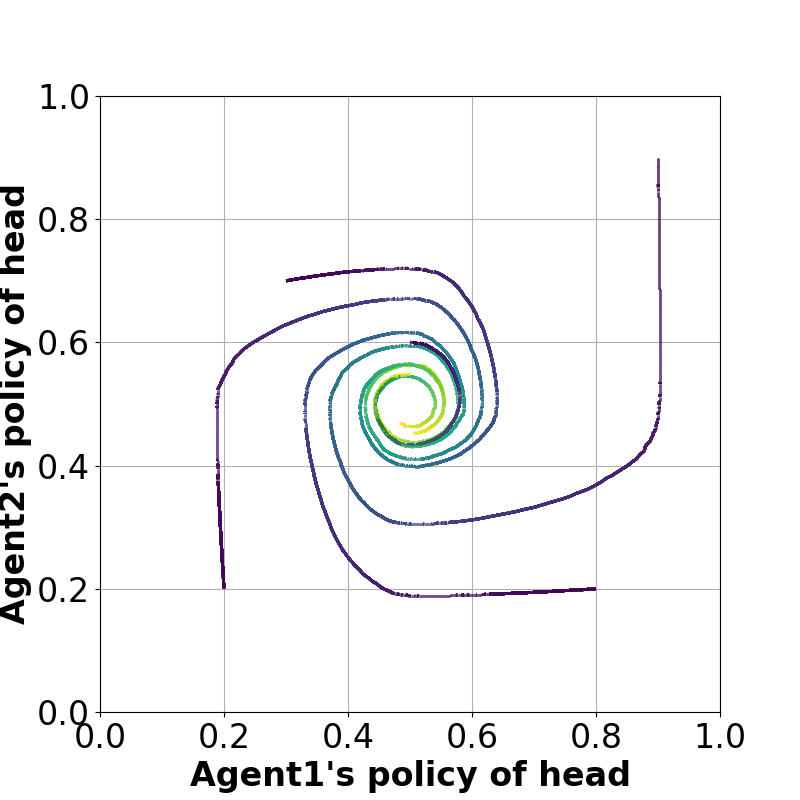}
        \label{fig:fig1b}
    \end{subfigure}
    
    \vspace{-0.08\linewidth}

    \begin{subfigure}[b]{0.5\linewidth}
        \includegraphics[width=\linewidth]{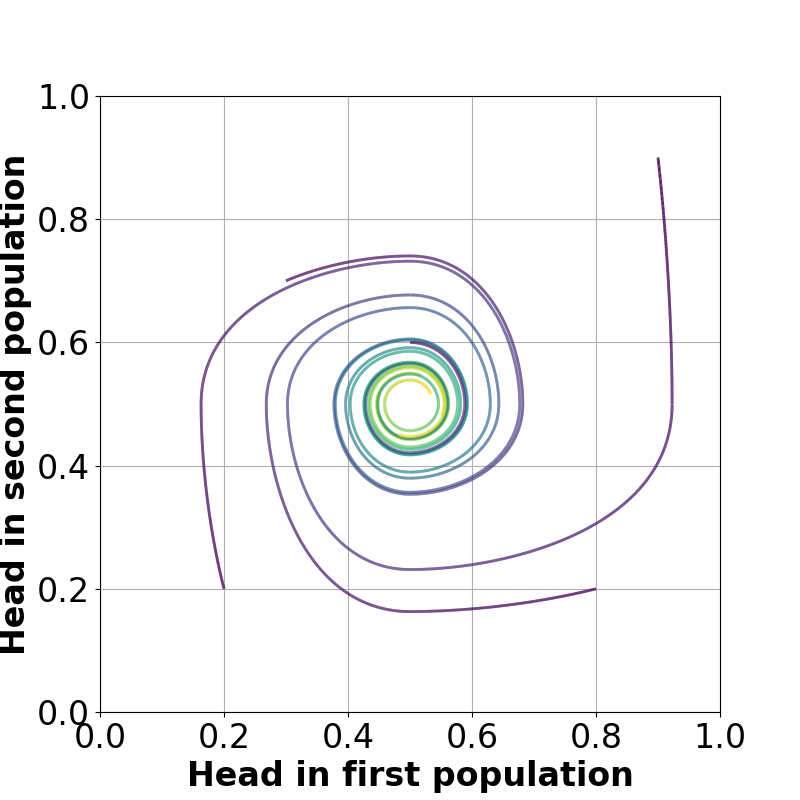}
    \end{subfigure}
    \hspace{-0.02\linewidth} 
    \begin{subfigure}[b]{0.5\linewidth}
        \includegraphics[width=\linewidth]{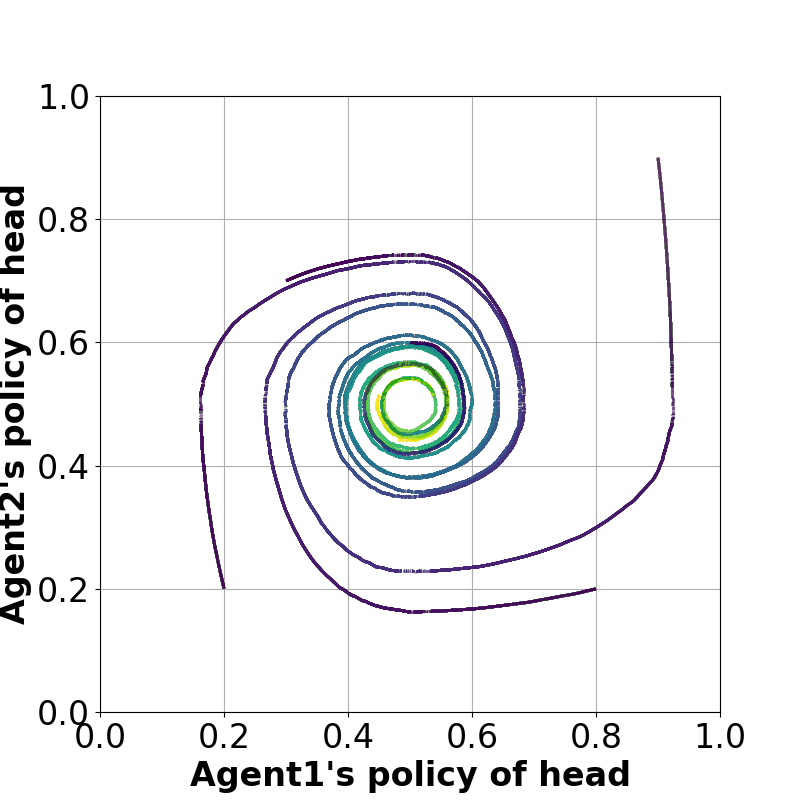}
    \end{subfigure}
    
    \caption{Innovative dynamics (left) vs policy trajectories (right) of the ERID algorithm using BNN dynamics (top) and Smith dynamics (bottom), respectively.}
    \label{fig:combined}
\end{figure}

\textbf{Biased RPS}. \ls{Here, we show the dynamics for the Biased RPS game}, a simplified version of the nonstationary RPS used in \ls{the Introduction}. In the nonstationary version, both the scaled matchups and the scaling factor could change, whereas in this version, both are fixed: the rock-paper matchup is consistently scaled by a factor of 2. The payoff matrix \ls{of each player is shown in the following:}

\begin{equation}
A = B = \begin{pmatrix}
    0 & -1 & 1 \\
    1 & 0 & -2 \\
    -1 & 2 & 0
\end{pmatrix}
\label{eq:rpspay}
\end{equation}

Figure~\ref{fig:BNN_RPS_combined} \ls{depicts} the results of \ls{the} simulations within a biased RPS game using two different dynamics: BNN dynamics for the top plots and Smith dynamics for the bottom plots. \ls{For this game, we use barycentric coordinates, as depicted in form of the shown triangles}, representing the strategy combinations of each player throughout the simulation. Each plot includes two trajectories, corresponding to the strategies chosen by the two players. The color gradients are used in the same way as in the matching pennies game. \ls{In this example, we set the initial conditions to (0.1, 0.1, 0.8) and (0.8, 0.1, 0.1), respectively}.

Both Figure~\ref{fig:combined} and Figure~\ref{fig:BNN_RPS_combined} share the same structure: the top plots depict BNN dynamics, while the bottom plots show Smith dynamics. In each figure, the left-hand plots present trajectories derived from the \ls{innovative} dynamics, whereas the right-hand plots display the results from simulations using the ERID algorithm with a step size of 1e-5 and a buffer size of 1000.

In each pair of plots, we observe that the ERID-generated trajectories on the right closely follow the \ls{evolutionary} trajectories on the left. Since both BNN and Smith dynamics are known to converge to the Nash equilibrium in zero-sum games, we can see that the corresponding ERID trajectories also converge to the NE.

Although the trajectories on the right-hand side exhibit slight perturbations due to the stochastic nature of reinforcement learning, leading to less smooth curves compared to the left-hand side, the overall trends remain remarkably consistent. \ls{This is in line with the results in} Theorem~\ref{th:BD} and Theorem~\ref{th:PC}, and show that ERID can benefit from the convergence guarantees provided by the respective dynamics.

\subsection{Comparison Cross Learning with ERID}
In this chapter, we revisit the example from the beginning of the paper and further increase its complexity. We continue using the nonstationary RPS game, but now the game settings change in a continuous manner. At the start of the game, the rock-paper matchup is scaled by a factor of 6. From step 3e5 to 9e5, this scaling factor gradually decreases to 1. Following this, from step 9e5 to 15e5, the scaling factor is applied to the scissors-rock matchup, gradually increasing from 1 to 6, and then over the next 6e5 steps, it decreases back to 1. Finally, for the last 6e5 steps, the scale factor is applied to the paper-scissors matchup, increasing gradually from 1 to 6.

We consider the case of a single agent playing a self-play game, a setup that is sufficient to ensure convergence to the Nash equilibrium in symmetric zero-sum games. The algorithms used include ERID with BNN, ERID with Smith, and Cross Learning. We selected Cross Learning as the representative because other algorithms either require global information (e.g., Hedge), introduce some bias relative to replicator dynamics (e.g., EXP3, FAQ-learning), or are based on function approximation (e.g., NeuRD). Given the underlying evolutionary dynamics are the same, we believe Cross Learning sufficiently represents the class of learning algorithms.

Due to environmental shifts that alter the range of {\sc NashConv} values, we also present both {\sc NashConv} and relative {\sc NashConv}. The relative {\sc NashConv} is calculated as the {\sc NashConv} divided by a scaling factor, which represents its potential maximum value.

\begin{figure}[b]
    \centering
    \begin{subfigure}[b]{0.5\linewidth}
        \includegraphics[width=\linewidth]{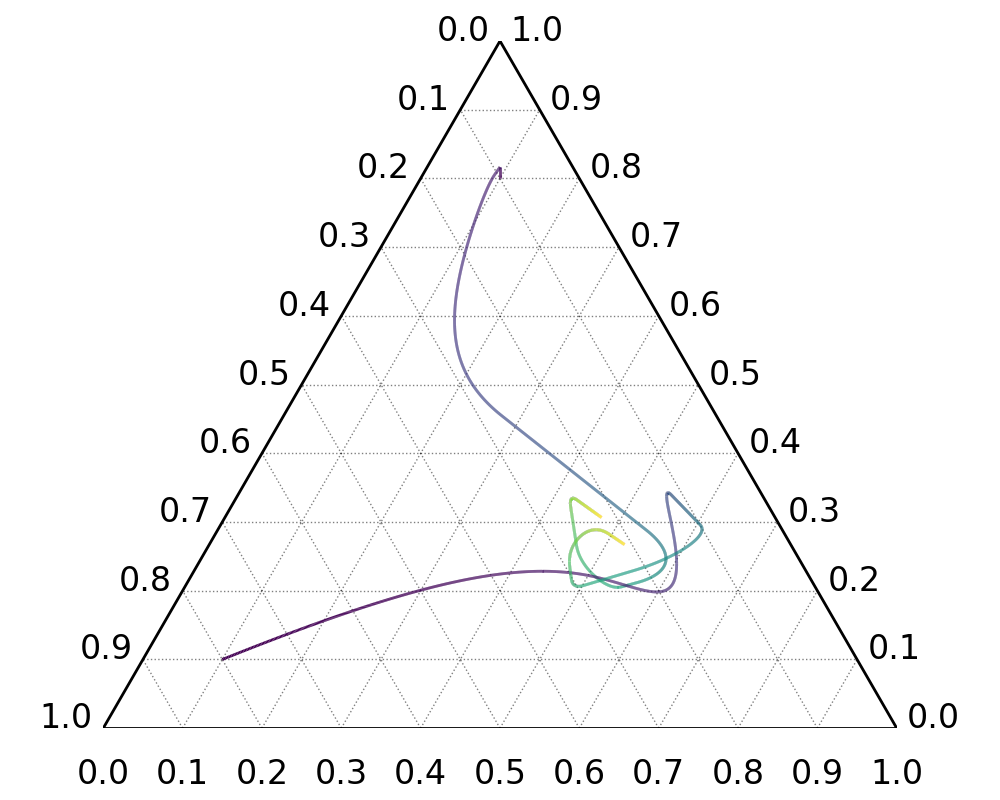}
        \label{fig:fig2a}
    \end{subfigure}
    \hspace{-0.02\linewidth} 
    \begin{subfigure}[b]{0.5\linewidth}
        \includegraphics[width=\linewidth]{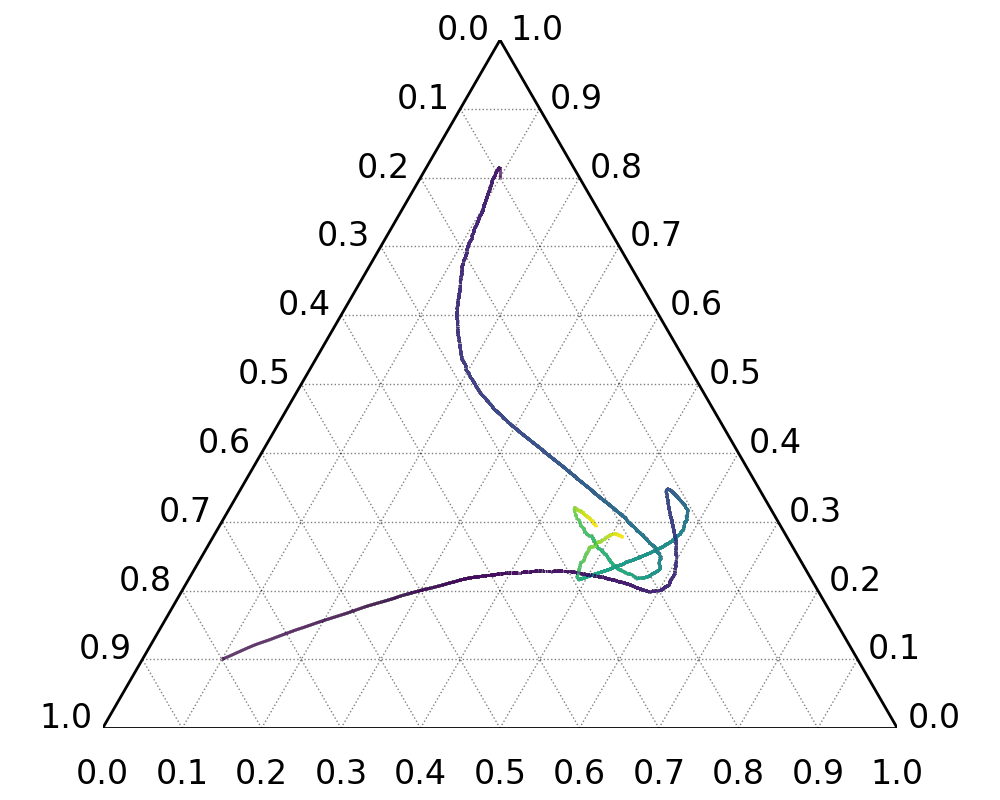}
        \label{fig:fig2b}
    \end{subfigure}
    
    \begin{subfigure}[b]{0.5\linewidth}
        \includegraphics[width=\linewidth]{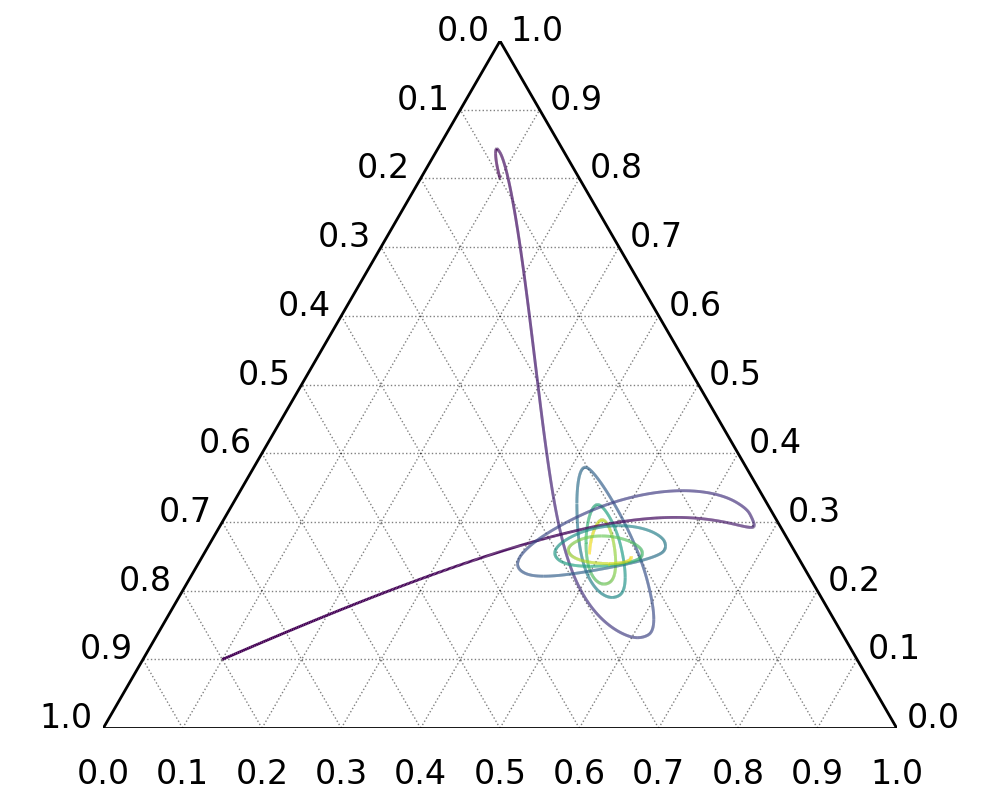}
    \end{subfigure}
    \hspace{-0.02\linewidth} 
    \begin{subfigure}[b]{0.5\linewidth}
        \includegraphics[width=\linewidth]{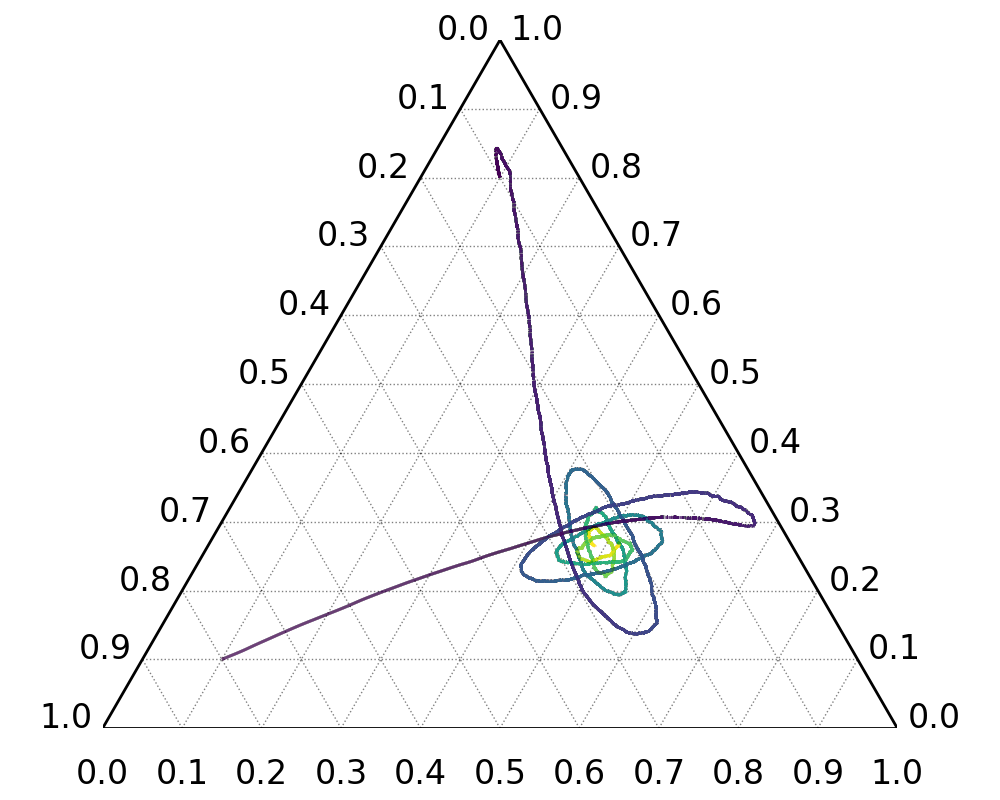}
    \end{subfigure}
    \caption{Innovative dynamics (left) vs policy trajectories (right) of the ERID algorithm using BNN dynamics (top) and Smith dynamics (bottom), respectively.}
    \label{fig:BNN_RPS_combined}
\end{figure}
As seen in Figure~\ref{fig:ex3}, before the environment begins to change, all three algorithms converge to the Nash equilibrium at a similar rate. However, as the environment begins to change, although {\sc NashConv} increases, the two agents using ERID still manage to approach the Nash equilibrium, whereas Cross Learning struggles to do so. Between steps 3e5 and 9e5, Cross Learning rapidly drifts away from the Nash equilibrium. This is due to the strategy being on a periodic orbit near the boundary of the simplex at the time of the shift. As the Nash equilibrium moves from one side of the simplex to the center, the ‘radius’ of the periodic orbit expands, forcing the strategy closer to the boundary, which hinders effective exploration of certain actions. In contrast, the ERID strategies, which converge directly to the Nash equilibrium, are not affected by this issue. By step 21e5, it is evident that Cross Learning has almost entirely lost its ability to converge to the Nash equilibrium, with the relative {\sc NashConv} remaining over 0.33—performing worse than a purely random strategy. Meanwhile, the ERID agents are unaffected by the horizon length.
\begin{figure}[t]
    \centering
    \begin{subfigure}[b]{0.8\linewidth}
        \includegraphics[width=\linewidth]{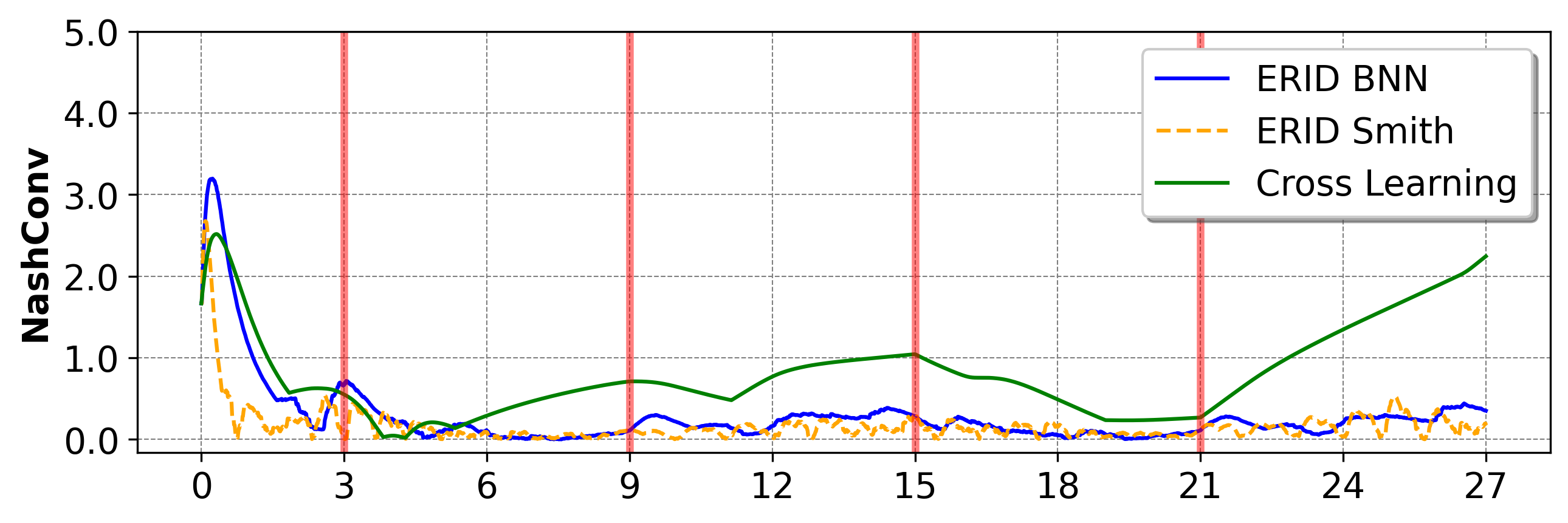}
        \label{fig:fig1a}
    \end{subfigure}
    
    \vspace{-0.08\linewidth}  
    
    \begin{subfigure}[b]{0.8\linewidth}
        \includegraphics[width=\linewidth]{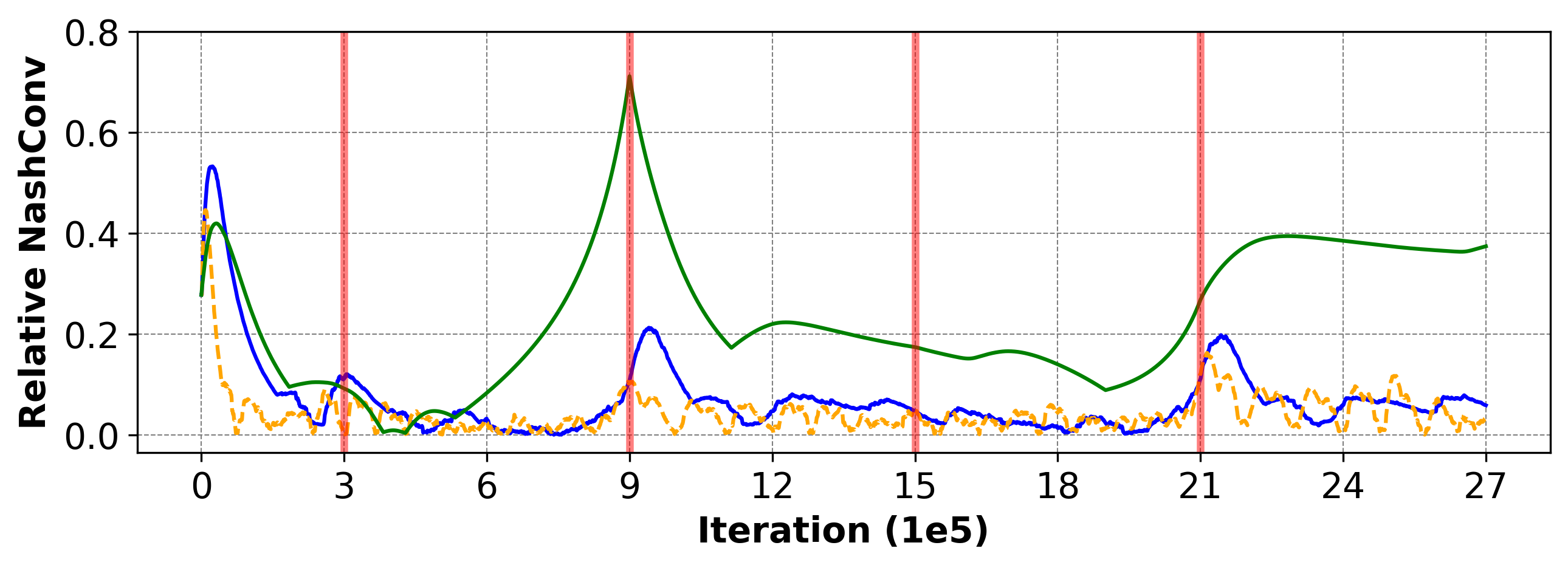}
        \label{fig:fig1b}
    \end{subfigure}
    \caption{Policy {\sc NashConv} and relative {\sc NashConv} of ERID with BNN, ERID with Smith, Cross learning in nonstationary RPS.}
    \label{fig:ex3}
\end{figure}

When comparing the two agents using ERID, ERID with BNN converges to the Nash equilibrium more slowly than ERID with Smith. However, in the long run, ERID with BNN appears to be closer to the Nash equilibrium.

\section{Conclusion}\label{sec:conc}
\ls{In this paper, we have proposed a new algorithm based on innovative dynamics, ERID, which is able to capture dynamic changes in the environment more effectively than traditional approaches based on replicator dynamics and their time-averaging counterpart. Through different revision protocols, our proposed algorithm is able to accommodate different protocol factors corresponding to three sets of dynamics, namely, BNN, Smith, and Smith-replicator-based pairwise comparison dynamics. We have demonstrated the convergence of our algorithm to the corresponding dynamics and provided convergence guarantees to the Nash equilibrium. Finally, we have evaluated the effectiveness of the proposed algorithm through a set of simulations including standard games such as matching pennies and rock-paper-scissors.}

Future \ls{directions of research include: i)} investigating the characteristics of different dynamics in learning environments, such as passivity, to more accurately determine the applicability of the algorithm, and ii) studying new dynamics through revision protocols, such as those with constraints, to develop algorithms tailored to specific environments \ls{that consider safety-critical aspects.}

\nocite{*}

\bibliographystyle{ACM-Reference-Format}
\bibliography{AAMAS_vfinal}


\begin{thebibliography}{64}


\ifx \showCODEN    \undefined \def \showCODEN     #1{\unskip}     \fi
\ifx \showDOI      \undefined \def \showDOI       #1{#1}\fi
\ifx \showISBNx    \undefined \def \showISBNx     #1{\unskip}     \fi
\ifx \showISBNxiii \undefined \def \showISBNxiii  #1{\unskip}     \fi
\ifx \showISSN     \undefined \def \showISSN      #1{\unskip}     \fi
\ifx \showLCCN     \undefined \def \showLCCN      #1{\unskip}     \fi
\ifx \shownote     \undefined \def \shownote      #1{#1}          \fi
\ifx \showarticletitle \undefined \def \showarticletitle #1{#1}   \fi
\ifx \showURL      \undefined \def \showURL       {\relax}        \fi
\providecommand\bibfield[2]{#2}
\providecommand\bibinfo[2]{#2}
\providecommand\natexlab[1]{#1}
\providecommand\showeprint[2][]{arXiv:#2}

\bibitem[\protect\citeauthoryear{Abdallah and Lesser}{Abdallah and
  Lesser}{2008}]%
        {abdallah2008multiagent}
\bibfield{author}{\bibinfo{person}{Sherief Abdallah} {and}
  \bibinfo{person}{Victor Lesser}.} \bibinfo{year}{2008}\natexlab{}.
\newblock \showarticletitle{A multiagent reinforcement learning algorithm with
  non-linear dynamics}.
\newblock \bibinfo{journal}{\emph{Journal of Artificial Intelligence Research}}
   \bibinfo{volume}{33} (\bibinfo{year}{2008}), \bibinfo{pages}{521--549}.
\newblock


\bibitem[\protect\citeauthoryear{Adler and Blue}{Adler and Blue}{2002}]%
        {adler2002cooperative}
\bibfield{author}{\bibinfo{person}{Jeffrey~L Adler} {and}
  \bibinfo{person}{Victor~J Blue}.} \bibinfo{year}{2002}\natexlab{}.
\newblock \showarticletitle{A cooperative multi-agent transportation management
  and route guidance system}.
\newblock \bibinfo{journal}{\emph{Transportation Research Part C: Emerging
  Technologies}} \bibinfo{volume}{10}, \bibinfo{number}{5-6}
  (\bibinfo{year}{2002}), \bibinfo{pages}{433--454}.
\newblock


\bibitem[\protect\citeauthoryear{Aurell and Djehiche}{Aurell and
  Djehiche}{2018}]%
        {Aurell2018}
\bibfield{author}{\bibinfo{person}{A. Aurell} {and} \bibinfo{person}{B.
  Djehiche}.} \bibinfo{year}{2018}\natexlab{}.
\newblock \showarticletitle{Mean-field type modeling of nonlocal crowd aversion
  in pedestrian crowd dynamics}.
\newblock \bibinfo{journal}{\emph{SIAM Journal on Control and Optimization}}
  \bibinfo{volume}{56}, \bibinfo{number}{1} (\bibinfo{year}{2018}),
  \bibinfo{pages}{434–455}.
\newblock


\bibitem[\protect\citeauthoryear{Aurell and Djehiche}{Aurell and
  Djehiche}{2020}]%
        {Aurell2020}
\bibfield{author}{\bibinfo{person}{A. Aurell} {and} \bibinfo{person}{B.
  Djehiche}.} \bibinfo{year}{2020}\natexlab{}.
\newblock \showarticletitle{Behavior near walls in the mean-field approach to
  crowd dynamics}.
\newblock \bibinfo{journal}{\emph{SIAM J. Appl. Math.}} \bibinfo{volume}{80},
  \bibinfo{number}{3} (\bibinfo{year}{2020}), \bibinfo{pages}{1153–1174}.
\newblock


\bibitem[\protect\citeauthoryear{Barfuss, Donges, and Kurths}{Barfuss
  et~al\mbox{.}}{2019}]%
        {barfuss2019deterministic}
\bibfield{author}{\bibinfo{person}{Wolfram Barfuss},
  \bibinfo{person}{Jonathan~F Donges}, {and} \bibinfo{person}{J{\"u}rgen
  Kurths}.} \bibinfo{year}{2019}\natexlab{}.
\newblock \showarticletitle{Deterministic limit of temporal difference
  reinforcement learning for stochastic games}.
\newblock \bibinfo{journal}{\emph{Physical Review E}} \bibinfo{volume}{99},
  \bibinfo{number}{4} (\bibinfo{year}{2019}), \bibinfo{pages}{043305}.
\newblock


\bibitem[\protect\citeauthoryear{Barreiro-Gomez}{Barreiro-Gomez}{2022}]%
        {Barreiro_22}
\bibfield{author}{\bibinfo{person}{J. Barreiro-Gomez}.}
  \bibinfo{year}{2022}\natexlab{}.
\newblock \showarticletitle{Stochastic differential games for crowd evacuation
  problems: A paradox}.
\newblock \bibinfo{journal}{\emph{Automatica}} \bibinfo{volume}{140},
  \bibinfo{number}{2022} (\bibinfo{year}{2022}), \bibinfo{pages}{110271}.
\newblock


\bibitem[\protect\citeauthoryear{Barreiro-Gomez, Mas, Giribet, Moreno,
  Ocampo-Martínez, {n}a, and Quijano}{Barreiro-Gomez et~al\mbox{.}}{2021}]%
        {UAVs}
\bibfield{author}{\bibinfo{person}{J. Barreiro-Gomez}, \bibinfo{person}{I.
  Mas}, \bibinfo{person}{J.~I. Giribet}, \bibinfo{person}{P. Moreno},
  \bibinfo{person}{C. Ocampo-Martínez}, \bibinfo{person}{R.~S\'{a}nchez-Pe\
  {n}a}, {and} \bibinfo{person}{N. Quijano}.} \bibinfo{year}{2021}\natexlab{}.
\newblock \showarticletitle{Distributed data-driven UAV formation control via
  evolutionary games: Experimental results}.
\newblock \bibinfo{journal}{\emph{Journal of the Franklin Institute}}
  \bibinfo{volume}{358}, \bibinfo{number}{10} (\bibinfo{year}{2021}),
  \bibinfo{pages}{5334--5352}.
\newblock


\bibitem[\protect\citeauthoryear{Barreiro-Gomez and Masmoudi}{Barreiro-Gomez
  and Masmoudi}{2023}]%
        {julian_nader}
\bibfield{author}{\bibinfo{person}{Julian Barreiro-Gomez} {and}
  \bibinfo{person}{Nader Masmoudi}.} \bibinfo{year}{2023}\natexlab{}.
\newblock \showarticletitle{Differential games for crowd dynamics and
  applications}.
\newblock \bibinfo{journal}{\emph{Mathematical Models and Methods in Applied
  Sciences}} \bibinfo{volume}{33}, \bibinfo{number}{13} (\bibinfo{year}{2023}),
  \bibinfo{pages}{2703--2742}.
\newblock


\bibitem[\protect\citeauthoryear{Barreiro-Gomez, Obando, Ria{\~n}o-Brice{\~n}o,
  Quijano, and Ocampo-Mart{\'\i}nez}{Barreiro-Gomez et~al\mbox{.}}{2015}]%
        {barreiro2015decentralized}
\bibfield{author}{\bibinfo{person}{Julian Barreiro-Gomez},
  \bibinfo{person}{Germ{\'a}n Obando}, \bibinfo{person}{Gerardo
  Ria{\~n}o-Brice{\~n}o}, \bibinfo{person}{Nicanor Quijano}, {and}
  \bibinfo{person}{Carlos Ocampo-Mart{\'\i}nez}.}
  \bibinfo{year}{2015}\natexlab{}.
\newblock \showarticletitle{Decentralized control for urban drainage systems
  via population dynamics: Bogot{\'a} case study}. In
  \bibinfo{booktitle}{\emph{2015 European Control Conference (ECC)}}. IEEE,
  \bibinfo{pages}{2426--2431}.
\newblock


\bibitem[\protect\citeauthoryear{Barreiro-Gomez and Tembine}{Barreiro-Gomez and
  Tembine}{2018}]%
        {EDM_constraints}
\bibfield{author}{\bibinfo{person}{J. Barreiro-Gomez} {and} \bibinfo{person}{H.
  Tembine}.} \bibinfo{year}{2018}\natexlab{}.
\newblock \showarticletitle{Constrained evolutionary games by using a mixture
  of imitation dynamics}.
\newblock \bibinfo{journal}{\emph{Automatica}}  \bibinfo{volume}{97}
  (\bibinfo{year}{2018}), \bibinfo{pages}{254--262}.
\newblock


\bibitem[\protect\citeauthoryear{B{\"o}rgers and Sarin}{B{\"o}rgers and
  Sarin}{1997}]%
        {borgers1997learning}
\bibfield{author}{\bibinfo{person}{Tilman B{\"o}rgers} {and}
  \bibinfo{person}{Rajiv Sarin}.} \bibinfo{year}{1997}\natexlab{}.
\newblock \showarticletitle{Learning through reinforcement and replicator
  dynamics}.
\newblock \bibinfo{journal}{\emph{Journal of economic theory}}
  \bibinfo{volume}{77}, \bibinfo{number}{1} (\bibinfo{year}{1997}),
  \bibinfo{pages}{1--14}.
\newblock


\bibitem[\protect\citeauthoryear{Brown and Von~Neumann}{Brown and
  Von~Neumann}{1950}]%
        {brown1950solutions}
\bibfield{author}{\bibinfo{person}{George~W Brown} {and} \bibinfo{person}{John
  Von~Neumann}.} \bibinfo{year}{1950}\natexlab{}.
\newblock \bibinfo{booktitle}{\emph{Solutions of games by differential
  equations}}.
\newblock \bibinfo{publisher}{Rand Corporation}.
\newblock


\bibitem[\protect\citeauthoryear{Carrillo, Martin, and Wolfram}{Carrillo
  et~al\mbox{.}}{2016}]%
        {Carrillo}
\bibfield{author}{\bibinfo{person}{J.~A. Carrillo}, \bibinfo{person}{S.
  Martin}, {and} \bibinfo{person}{M. Wolfram}.}
  \bibinfo{year}{2016}\natexlab{}.
\newblock \showarticletitle{An improved version of the Hughes model for
  pedestrian flow}.
\newblock \bibinfo{journal}{\emph{Mathematical Models and Methods in Applied
  Sciences}} \bibinfo{volume}{26}, \bibinfo{number}{4} (\bibinfo{year}{2016}),
  \bibinfo{pages}{671–697}.
\newblock


\bibitem[\protect\citeauthoryear{Cortes, Martinez, Karatas, and Bullo}{Cortes
  et~al\mbox{.}}{2004}]%
        {cortes2004coverage}
\bibfield{author}{\bibinfo{person}{Jorge Cortes}, \bibinfo{person}{Sonia
  Martinez}, \bibinfo{person}{Timur Karatas}, {and} \bibinfo{person}{Francesco
  Bullo}.} \bibinfo{year}{2004}\natexlab{}.
\newblock \showarticletitle{Coverage control for mobile sensing networks}.
\newblock \bibinfo{journal}{\emph{IEEE Transactions on robotics and
  Automation}} \bibinfo{volume}{20}, \bibinfo{number}{2}
  (\bibinfo{year}{2004}), \bibinfo{pages}{243--255}.
\newblock


\bibitem[\protect\citeauthoryear{Cross}{Cross}{1973}]%
        {cross1973stochastic}
\bibfield{author}{\bibinfo{person}{John~G Cross}.}
  \bibinfo{year}{1973}\natexlab{}.
\newblock \showarticletitle{A stochastic learning model of economic behavior}.
\newblock \bibinfo{journal}{\emph{The quarterly journal of economics}}
  \bibinfo{volume}{87}, \bibinfo{number}{2} (\bibinfo{year}{1973}),
  \bibinfo{pages}{239--266}.
\newblock


\bibitem[\protect\citeauthoryear{Fudenberg}{Fudenberg}{1991}]%
        {fudenberg1991game}
\bibfield{author}{\bibinfo{person}{Drew Fudenberg}.}
  \bibinfo{year}{1991}\natexlab{}.
\newblock \bibinfo{booktitle}{\emph{Game theory}}.
\newblock \bibinfo{publisher}{MIT press}.
\newblock


\bibitem[\protect\citeauthoryear{Garc{\'\i}a, Barreiro-Gomez, Escobar,
  T{\'e}llez, Quijano, and Ocampo-Mart{\'\i}nez}{Garc{\'\i}a
  et~al\mbox{.}}{2015}]%
        {garcia2015modeling}
\bibfield{author}{\bibinfo{person}{Lu{\'\i}s Garc{\'\i}a},
  \bibinfo{person}{Julian Barreiro-Gomez}, \bibinfo{person}{Eduardo Escobar},
  \bibinfo{person}{Duv{\'a}n T{\'e}llez}, \bibinfo{person}{Nicanor Quijano},
  {and} \bibinfo{person}{Carlos Ocampo-Mart{\'\i}nez}.}
  \bibinfo{year}{2015}\natexlab{}.
\newblock \showarticletitle{Modeling and real-time control of urban drainage
  systems: A review}.
\newblock \bibinfo{journal}{\emph{Advances in Water Resources}}
  \bibinfo{volume}{85} (\bibinfo{year}{2015}), \bibinfo{pages}{120--132}.
\newblock


\bibitem[\protect\citeauthoryear{Hennes, Kaisers, and Tuyls}{Hennes
  et~al\mbox{.}}{2010}]%
        {hennes2010resq}
\bibfield{author}{\bibinfo{person}{Daniel Hennes}, \bibinfo{person}{Michael
  Kaisers}, {and} \bibinfo{person}{Karl Tuyls}.}
  \bibinfo{year}{2010}\natexlab{}.
\newblock \showarticletitle{RESQ-learning in stochastic games}. In
  \bibinfo{booktitle}{\emph{Adaptive and Learning Agents Workshop at AAMAS}}.
  Citeseer, \bibinfo{pages}{8}.
\newblock


\bibitem[\protect\citeauthoryear{Hennes, Morrill, Omidshafiei, Munos, Perolat,
  Lanctot, Gruslys, Lespiau, Parmas, Du{\'e}{\~n}ez-Guzm{\'a}n,
  et~al\mbox{.}}{Hennes et~al\mbox{.}}{2020}]%
        {hennes2020neural}
\bibfield{author}{\bibinfo{person}{Daniel Hennes}, \bibinfo{person}{Dustin
  Morrill}, \bibinfo{person}{Shayegan Omidshafiei}, \bibinfo{person}{R{\'e}mi
  Munos}, \bibinfo{person}{Julien Perolat}, \bibinfo{person}{Marc Lanctot},
  \bibinfo{person}{Audrunas Gruslys}, \bibinfo{person}{Jean-Baptiste Lespiau},
  \bibinfo{person}{Paavo Parmas}, \bibinfo{person}{Edgar
  Du{\'e}{\~n}ez-Guzm{\'a}n}, {et~al\mbox{.}}} \bibinfo{year}{2020}\natexlab{}.
\newblock \showarticletitle{Neural replicator dynamics: Multiagent learning via
  hedging policy gradients}. In \bibinfo{booktitle}{\emph{Proceedings of the
  19th international conference on autonomous agents and multiagent systems}}.
  \bibinfo{pages}{492--501}.
\newblock


\bibitem[\protect\citeauthoryear{Hennes, Tuyls, and Rauterberg}{Hennes
  et~al\mbox{.}}{2009}]%
        {hennes2009state}
\bibfield{author}{\bibinfo{person}{Daniel Hennes}, \bibinfo{person}{Karl
  Tuyls}, {and} \bibinfo{person}{Matthias Rauterberg}.}
  \bibinfo{year}{2009}\natexlab{}.
\newblock \showarticletitle{State-coupled replicator dynamics.}. In
  \bibinfo{booktitle}{\emph{AAMAS (2)}}. \bibinfo{pages}{789--796}.
\newblock


\bibitem[\protect\citeauthoryear{Hofbauer}{Hofbauer}{2011}]%
        {hofbauer2011deterministic}
\bibfield{author}{\bibinfo{person}{Josef Hofbauer}.}
  \bibinfo{year}{2011}\natexlab{}.
\newblock \showarticletitle{Deterministic evolutionary game dynamics}.
\newblock  (\bibinfo{year}{2011}).
\newblock


\bibitem[\protect\citeauthoryear{Hofbauer and Sandholm}{Hofbauer and
  Sandholm}{2009}]%
        {hofbauer2009stable}
\bibfield{author}{\bibinfo{person}{Josef Hofbauer} {and}
  \bibinfo{person}{William~H Sandholm}.} \bibinfo{year}{2009}\natexlab{}.
\newblock \showarticletitle{Stable games and their dynamics}.
\newblock \bibinfo{journal}{\emph{Journal of Economic theory}}
  \bibinfo{volume}{144}, \bibinfo{number}{4} (\bibinfo{year}{2009}),
  \bibinfo{pages}{1665--1693}.
\newblock


\bibitem[\protect\citeauthoryear{Hofbauer, Sorin, and Viossat}{Hofbauer
  et~al\mbox{.}}{2009}]%
        {hofbauer2009time}
\bibfield{author}{\bibinfo{person}{Josef Hofbauer}, \bibinfo{person}{Sylvain
  Sorin}, {and} \bibinfo{person}{Yannick Viossat}.}
  \bibinfo{year}{2009}\natexlab{}.
\newblock \showarticletitle{Time average replicator and best-reply dynamics}.
\newblock \bibinfo{journal}{\emph{Mathematics of Operations Research}}
  \bibinfo{volume}{34}, \bibinfo{number}{2} (\bibinfo{year}{2009}),
  \bibinfo{pages}{263--269}.
\newblock


\bibitem[\protect\citeauthoryear{Hughes}{Hughes}{2002}]%
        {Hughes}
\bibfield{author}{\bibinfo{person}{R.~L. Hughes}.}
  \bibinfo{year}{2002}\natexlab{}.
\newblock \showarticletitle{A continuum theory for the flow of pedestrians}.
\newblock \bibinfo{journal}{\emph{Transportation Research Part B}}
  \bibinfo{volume}{36}, \bibinfo{number}{2002} (\bibinfo{year}{2002}),
  \bibinfo{pages}{507–535}.
\newblock


\bibitem[\protect\citeauthoryear{Hughes}{Hughes}{2003}]%
        {Hughes_2}
\bibfield{author}{\bibinfo{person}{R.~L. Hughes}.}
  \bibinfo{year}{2003}\natexlab{}.
\newblock \showarticletitle{The flow of human crowds}.
\newblock \bibinfo{journal}{\emph{Annual Review of Fluid Mechanics}}
  \bibinfo{volume}{18}, \bibinfo{number}{1} (\bibinfo{year}{2003}),
  \bibinfo{pages}{169--182}.
\newblock


\bibitem[\protect\citeauthoryear{Kaisers, Bloembergen, and Tuyls}{Kaisers
  et~al\mbox{.}}{2012}]%
        {kaisers2012common}
\bibfield{author}{\bibinfo{person}{Michael Kaisers}, \bibinfo{person}{Daan
  Bloembergen}, {and} \bibinfo{person}{Karl Tuyls}.}
  \bibinfo{year}{2012}\natexlab{}.
\newblock \showarticletitle{A common gradient in multi-agent reinforcement
  learning}. In \bibinfo{booktitle}{\emph{AAMAS}}. \bibinfo{pages}{1393--1394}.
\newblock


\bibitem[\protect\citeauthoryear{Kaisers and Tuyls}{Kaisers and Tuyls}{2010}]%
        {kaisers2010frequency}
\bibfield{author}{\bibinfo{person}{Michael Kaisers} {and} \bibinfo{person}{Karl
  Tuyls}.} \bibinfo{year}{2010}\natexlab{}.
\newblock \showarticletitle{Frequency adjusted multi-agent Q-learning}. In
  \bibinfo{booktitle}{\emph{Proceedings of the 9th International Conference on
  Autonomous Agents and Multiagent Systems: volume 1-Volume 1}}.
  \bibinfo{pages}{309--316}.
\newblock


\bibitem[\protect\citeauthoryear{Kianercy and Galstyan}{Kianercy and
  Galstyan}{2012}]%
        {kianercy2012dynamics}
\bibfield{author}{\bibinfo{person}{Ardeshir Kianercy} {and}
  \bibinfo{person}{Aram Galstyan}.} \bibinfo{year}{2012}\natexlab{}.
\newblock \showarticletitle{Dynamics of Boltzmann Q learning in two-player
  two-action games}.
\newblock \bibinfo{journal}{\emph{Physical Review E}} \bibinfo{volume}{85},
  \bibinfo{number}{4} (\bibinfo{year}{2012}), \bibinfo{pages}{041145}.
\newblock


\bibitem[\protect\citeauthoryear{Klos, Van~Ahee, and Tuyls}{Klos
  et~al\mbox{.}}{2010}]%
        {klos2010evolutionary}
\bibfield{author}{\bibinfo{person}{Tomas Klos}, \bibinfo{person}{Gerrit~Jan
  Van~Ahee}, {and} \bibinfo{person}{Karl Tuyls}.}
  \bibinfo{year}{2010}\natexlab{}.
\newblock \showarticletitle{Evolutionary dynamics of regret minimization}. In
  \bibinfo{booktitle}{\emph{Joint European Conference on Machine Learning and
  Knowledge Discovery in Databases}}. Springer, \bibinfo{pages}{82--96}.
\newblock


\bibitem[\protect\citeauthoryear{Lanctot, Zambaldi, Gruslys, Lazaridou, Tuyls,
  P{\'e}rolat, Silver, and Graepel}{Lanctot et~al\mbox{.}}{2017}]%
        {lanctot2017unified}
\bibfield{author}{\bibinfo{person}{Marc Lanctot}, \bibinfo{person}{Vinicius
  Zambaldi}, \bibinfo{person}{Audrunas Gruslys}, \bibinfo{person}{Angeliki
  Lazaridou}, \bibinfo{person}{Karl Tuyls}, \bibinfo{person}{Julien
  P{\'e}rolat}, \bibinfo{person}{David Silver}, {and} \bibinfo{person}{Thore
  Graepel}.} \bibinfo{year}{2017}\natexlab{}.
\newblock \showarticletitle{A unified game-theoretic approach to multiagent
  reinforcement learning}.
\newblock \bibinfo{journal}{\emph{Advances in neural information processing
  systems}}  \bibinfo{volume}{30} (\bibinfo{year}{2017}).
\newblock


\bibitem[\protect\citeauthoryear{Lange, Gabel, and Riedmiller}{Lange
  et~al\mbox{.}}{2012}]%
        {lange2012batch}
\bibfield{author}{\bibinfo{person}{Sascha Lange}, \bibinfo{person}{Thomas
  Gabel}, {and} \bibinfo{person}{Martin Riedmiller}.}
  \bibinfo{year}{2012}\natexlab{}.
\newblock \showarticletitle{Batch reinforcement learning}.
\newblock In \bibinfo{booktitle}{\emph{Reinforcement learning:
  State-of-the-art}}. \bibinfo{publisher}{Springer}, \bibinfo{pages}{45--73}.
\newblock


\bibitem[\protect\citeauthoryear{Lee, Park, Jangmin, Lee, and Hong}{Lee
  et~al\mbox{.}}{2007}]%
        {lee2007multiagent}
\bibfield{author}{\bibinfo{person}{Jae~Won Lee}, \bibinfo{person}{Jonghun
  Park}, \bibinfo{person}{O Jangmin}, \bibinfo{person}{Jongwoo Lee}, {and}
  \bibinfo{person}{Euyseok Hong}.} \bibinfo{year}{2007}\natexlab{}.
\newblock \showarticletitle{A multiagent approach to $ q $-learning for daily
  stock trading}.
\newblock \bibinfo{journal}{\emph{IEEE Transactions on Systems, Man, and
  Cybernetics-Part A: Systems and Humans}} \bibinfo{volume}{37},
  \bibinfo{number}{6} (\bibinfo{year}{2007}), \bibinfo{pages}{864--877}.
\newblock


\bibitem[\protect\citeauthoryear{Lillicrap, Hunt, Pritzel, Heess, Erez, Tassa,
  Silver, and Wierstra}{Lillicrap et~al\mbox{.}}{2015}]%
        {lillicrap2015continuous}
\bibfield{author}{\bibinfo{person}{Timothy~P Lillicrap},
  \bibinfo{person}{Jonathan~J Hunt}, \bibinfo{person}{Alexander Pritzel},
  \bibinfo{person}{Nicolas Heess}, \bibinfo{person}{Tom Erez},
  \bibinfo{person}{Yuval Tassa}, \bibinfo{person}{David Silver}, {and}
  \bibinfo{person}{Daan Wierstra}.} \bibinfo{year}{2015}\natexlab{}.
\newblock \showarticletitle{Continuous control with deep reinforcement
  learning}.
\newblock \bibinfo{journal}{\emph{arXiv preprint arXiv:1509.02971}}
  (\bibinfo{year}{2015}).
\newblock


\bibitem[\protect\citeauthoryear{Marden}{Marden}{2012}]%
        {marden2012state}
\bibfield{author}{\bibinfo{person}{Jason~R Marden}.}
  \bibinfo{year}{2012}\natexlab{}.
\newblock \showarticletitle{State based potential games}.
\newblock \bibinfo{journal}{\emph{Automatica}} \bibinfo{volume}{48},
  \bibinfo{number}{12} (\bibinfo{year}{2012}), \bibinfo{pages}{3075--3088}.
\newblock


\bibitem[\protect\citeauthoryear{Marden, Ruben, and Pao}{Marden
  et~al\mbox{.}}{2013}]%
        {marden2013model}
\bibfield{author}{\bibinfo{person}{Jason~R Marden}, \bibinfo{person}{Shalom~D
  Ruben}, {and} \bibinfo{person}{Lucy~Y Pao}.} \bibinfo{year}{2013}\natexlab{}.
\newblock \showarticletitle{A model-free approach to wind farm control using
  game theoretic methods}.
\newblock \bibinfo{journal}{\emph{IEEE Transactions on Control Systems
  Technology}} \bibinfo{volume}{21}, \bibinfo{number}{4}
  (\bibinfo{year}{2013}), \bibinfo{pages}{1207--1214}.
\newblock


\bibitem[\protect\citeauthoryear{Martins, Cert{\'o}rio, and Hankins}{Martins
  et~al\mbox{.}}{2024}]%
        {martins2024counterclockwise}
\bibfield{author}{\bibinfo{person}{Nuno~C Martins}, \bibinfo{person}{Jair
  Cert{\'o}rio}, {and} \bibinfo{person}{Matthew~S Hankins}.}
  \bibinfo{year}{2024}\natexlab{}.
\newblock \showarticletitle{Counterclockwise Dissipativity, Potential Games and
  Evolutionary Nash Equilibrium Learning}.
\newblock \bibinfo{journal}{\emph{arXiv preprint arXiv:2408.00647}}
  (\bibinfo{year}{2024}).
\newblock


\bibitem[\protect\citeauthoryear{Mason, Stella, and Bauso}{Mason
  et~al\mbox{.}}{2020}]%
        {mason2020evolutionary}
\bibfield{author}{\bibinfo{person}{Diego~Marti Mason},
  \bibinfo{person}{Leonardo Stella}, {and} \bibinfo{person}{Dario Bauso}.}
  \bibinfo{year}{2020}\natexlab{}.
\newblock \showarticletitle{Evolutionary game dynamics for crowd behavior in
  emergency evacuations}. In \bibinfo{booktitle}{\emph{2020 59th IEEE
  Conference on Decision and Control (CDC)}}. IEEE,
  \bibinfo{pages}{1672--1677}.
\newblock


\bibitem[\protect\citeauthoryear{Mertikopoulos and Sandholm}{Mertikopoulos and
  Sandholm}{2016}]%
        {mertikopoulos2016learning}
\bibfield{author}{\bibinfo{person}{Panayotis Mertikopoulos} {and}
  \bibinfo{person}{William~H Sandholm}.} \bibinfo{year}{2016}\natexlab{}.
\newblock \showarticletitle{Learning in games via reinforcement and
  regularization}.
\newblock \bibinfo{journal}{\emph{Mathematics of Operations Research}}
  \bibinfo{volume}{41}, \bibinfo{number}{4} (\bibinfo{year}{2016}),
  \bibinfo{pages}{1297--1324}.
\newblock


\bibitem[\protect\citeauthoryear{Mertikopoulos and Sandholm}{Mertikopoulos and
  Sandholm}{2018}]%
        {mertikopoulos2018riemannian}
\bibfield{author}{\bibinfo{person}{Panayotis Mertikopoulos} {and}
  \bibinfo{person}{William~H Sandholm}.} \bibinfo{year}{2018}\natexlab{}.
\newblock \showarticletitle{Riemannian game dynamics}.
\newblock \bibinfo{journal}{\emph{Journal of Economic Theory}}
  \bibinfo{volume}{177} (\bibinfo{year}{2018}), \bibinfo{pages}{315--364}.
\newblock


\bibitem[\protect\citeauthoryear{Mnih, Kavukcuoglu, Silver, Rusu, Veness,
  Bellemare, Graves, Riedmiller, Fidjeland, Ostrovski, et~al\mbox{.}}{Mnih
  et~al\mbox{.}}{2015}]%
        {mnih2015human}
\bibfield{author}{\bibinfo{person}{Volodymyr Mnih}, \bibinfo{person}{Koray
  Kavukcuoglu}, \bibinfo{person}{David Silver}, \bibinfo{person}{Andrei~A
  Rusu}, \bibinfo{person}{Joel Veness}, \bibinfo{person}{Marc~G Bellemare},
  \bibinfo{person}{Alex Graves}, \bibinfo{person}{Martin Riedmiller},
  \bibinfo{person}{Andreas~K Fidjeland}, \bibinfo{person}{Georg Ostrovski},
  {et~al\mbox{.}}} \bibinfo{year}{2015}\natexlab{}.
\newblock \showarticletitle{Human-level control through deep reinforcement
  learning}.
\newblock \bibinfo{journal}{\emph{Nature}} \bibinfo{volume}{518},
  \bibinfo{number}{7540} (\bibinfo{year}{2015}), \bibinfo{pages}{529--533}.
\newblock


\bibitem[\protect\citeauthoryear{Perolat, De~Vylder, Hennes, Tarassov, Strub,
  de~Boer, Muller, Connor, Burch, Anthony, et~al\mbox{.}}{Perolat
  et~al\mbox{.}}{2022}]%
        {perolat2022mastering}
\bibfield{author}{\bibinfo{person}{Julien Perolat}, \bibinfo{person}{Bart
  De~Vylder}, \bibinfo{person}{Daniel Hennes}, \bibinfo{person}{Eugene
  Tarassov}, \bibinfo{person}{Florian Strub}, \bibinfo{person}{Vincent de
  Boer}, \bibinfo{person}{Paul Muller}, \bibinfo{person}{Jerome~T Connor},
  \bibinfo{person}{Neil Burch}, \bibinfo{person}{Thomas Anthony},
  {et~al\mbox{.}}} \bibinfo{year}{2022}\natexlab{}.
\newblock \showarticletitle{Mastering the game of Stratego with model-free
  multiagent reinforcement learning}.
\newblock \bibinfo{journal}{\emph{Science}} \bibinfo{volume}{378},
  \bibinfo{number}{6623} (\bibinfo{year}{2022}), \bibinfo{pages}{990--996}.
\newblock


\bibitem[\protect\citeauthoryear{Perolat, Munos, Lespiau, Omidshafiei, Rowland,
  Ortega, Burch, Anthony, Balduzzi, De~Vylder, et~al\mbox{.}}{Perolat
  et~al\mbox{.}}{2021}]%
        {perolat2021poincare}
\bibfield{author}{\bibinfo{person}{Julien Perolat}, \bibinfo{person}{Remi
  Munos}, \bibinfo{person}{Jean-Baptiste Lespiau}, \bibinfo{person}{Shayegan
  Omidshafiei}, \bibinfo{person}{Mark Rowland}, \bibinfo{person}{Pedro Ortega},
  \bibinfo{person}{Neil Burch}, \bibinfo{person}{Thomas Anthony},
  \bibinfo{person}{David Balduzzi}, \bibinfo{person}{Bart De~Vylder},
  {et~al\mbox{.}}} \bibinfo{year}{2021}\natexlab{}.
\newblock \showarticletitle{From Poincar{\'e} recurrence to convergence in
  imperfect information games: Finding equilibrium via regularization}. In
  \bibinfo{booktitle}{\emph{International Conference on Machine Learning}}.
  PMLR, \bibinfo{pages}{8525--8535}.
\newblock


\bibitem[\protect\citeauthoryear{Ramirez-Llanos and Quijano}{Ramirez-Llanos and
  Quijano}{2010}]%
        {EDM_water_passivity}
\bibfield{author}{\bibinfo{person}{E. Ramirez-Llanos} {and} \bibinfo{person}{N.
  Quijano}.} \bibinfo{year}{2010}\natexlab{}.
\newblock \showarticletitle{A population dynamics approach for the water
  distribution problem}.
\newblock \bibinfo{journal}{\emph{Internat. J. Control}}  \bibinfo{volume}{83}
  (\bibinfo{year}{2010}), \bibinfo{pages}{1947--1964}.
\newblock
Issue 9.


\bibitem[\protect\citeauthoryear{Sandholm}{Sandholm}{2010}]%
        {sandholm2010population}
\bibfield{author}{\bibinfo{person}{William~H Sandholm}.}
  \bibinfo{year}{2010}\natexlab{}.
\newblock \bibinfo{booktitle}{\emph{Population games and evolutionary
  dynamics}}.
\newblock \bibinfo{publisher}{MIT press}.
\newblock


\bibitem[\protect\citeauthoryear{Sandholm}{Sandholm}{2015}]%
        {sandholm2015population}
\bibfield{author}{\bibinfo{person}{William~H Sandholm}.}
  \bibinfo{year}{2015}\natexlab{}.
\newblock \showarticletitle{Population games and deterministic evolutionary
  dynamics}.
\newblock In \bibinfo{booktitle}{\emph{Handbook of game theory with economic
  applications}}. Vol.~\bibinfo{volume}{4}. \bibinfo{publisher}{Elsevier},
  \bibinfo{pages}{703--778}.
\newblock


\bibitem[\protect\citeauthoryear{Sandholm, Dokumac{\i}, and Lahkar}{Sandholm
  et~al\mbox{.}}{2008}]%
        {sandholm2008projection}
\bibfield{author}{\bibinfo{person}{William~H Sandholm}, \bibinfo{person}{Emin
  Dokumac{\i}}, {and} \bibinfo{person}{Ratul Lahkar}.}
  \bibinfo{year}{2008}\natexlab{}.
\newblock \showarticletitle{The projection dynamic and the replicator dynamic}.
\newblock \bibinfo{journal}{\emph{Games and Economic Behavior}}
  \bibinfo{volume}{64}, \bibinfo{number}{2} (\bibinfo{year}{2008}),
  \bibinfo{pages}{666--683}.
\newblock


\bibitem[\protect\citeauthoryear{Silver, Schrittwieser, Simonyan, Antonoglou,
  Huang, Guez, Hubert, Baker, Lai, Bolton, et~al\mbox{.}}{Silver
  et~al\mbox{.}}{2017}]%
        {silver2017mastering}
\bibfield{author}{\bibinfo{person}{David Silver}, \bibinfo{person}{Julian
  Schrittwieser}, \bibinfo{person}{Karen Simonyan}, \bibinfo{person}{Ioannis
  Antonoglou}, \bibinfo{person}{Aja Huang}, \bibinfo{person}{Arthur Guez},
  \bibinfo{person}{Thomas Hubert}, \bibinfo{person}{Lucas Baker},
  \bibinfo{person}{Matthew Lai}, \bibinfo{person}{Adrian Bolton},
  {et~al\mbox{.}}} \bibinfo{year}{2017}\natexlab{}.
\newblock \showarticletitle{Mastering the game of go without human knowledge}.
\newblock \bibinfo{journal}{\emph{nature}} \bibinfo{volume}{550},
  \bibinfo{number}{7676} (\bibinfo{year}{2017}), \bibinfo{pages}{354--359}.
\newblock


\bibitem[\protect\citeauthoryear{Smith}{Smith}{1974}]%
        {smith1974theory}
\bibfield{author}{\bibinfo{person}{J~Maynard Smith}.}
  \bibinfo{year}{1974}\natexlab{}.
\newblock \showarticletitle{The theory of games and the evolution of animal
  conflicts}.
\newblock \bibinfo{journal}{\emph{Journal of theoretical biology}}
  \bibinfo{volume}{47}, \bibinfo{number}{1} (\bibinfo{year}{1974}),
  \bibinfo{pages}{209--221}.
\newblock


\bibitem[\protect\citeauthoryear{Smith and Price}{Smith and Price}{1973}]%
        {smith1973logic}
\bibfield{author}{\bibinfo{person}{J~Maynard Smith} {and}
  \bibinfo{person}{George~R Price}.} \bibinfo{year}{1973}\natexlab{}.
\newblock \showarticletitle{The logic of animal conflict}.
\newblock \bibinfo{journal}{\emph{Nature}} \bibinfo{volume}{246},
  \bibinfo{number}{5427} (\bibinfo{year}{1973}), \bibinfo{pages}{15--18}.
\newblock


\bibitem[\protect\citeauthoryear{Smith}{Smith}{1984}]%
        {smith1984stability}
\bibfield{author}{\bibinfo{person}{Michael~J Smith}.}
  \bibinfo{year}{1984}\natexlab{}.
\newblock \showarticletitle{The stability of a dynamic model of traffic
  assignment—an application of a method of Lyapunov}.
\newblock \bibinfo{journal}{\emph{Transportation science}}
  \bibinfo{volume}{18}, \bibinfo{number}{3} (\bibinfo{year}{1984}),
  \bibinfo{pages}{245--252}.
\newblock


\bibitem[\protect\citeauthoryear{Stella, Baar, and Bauso}{Stella
  et~al\mbox{.}}{2022}]%
        {stella2022lower}
\bibfield{author}{\bibinfo{person}{Leonardo Stella}, \bibinfo{person}{Wouter
  Baar}, {and} \bibinfo{person}{Dario Bauso}.} \bibinfo{year}{2022}\natexlab{}.
\newblock \showarticletitle{Lower network degrees promote cooperation in the
  prisoner’s dilemma with environmental feedback}.
\newblock \bibinfo{journal}{\emph{IEEE Control Systems Letters}}
  \bibinfo{volume}{6} (\bibinfo{year}{2022}), \bibinfo{pages}{2725--2730}.
\newblock


\bibitem[\protect\citeauthoryear{Stella and Bauso}{Stella and Bauso}{2019}]%
        {stella2019bio}
\bibfield{author}{\bibinfo{person}{Leonardo Stella} {and}
  \bibinfo{person}{Dario Bauso}.} \bibinfo{year}{2019}\natexlab{}.
\newblock \showarticletitle{Bio-inspired evolutionary dynamics on complex
  networks under uncertain cross-inhibitory signals}.
\newblock \bibinfo{journal}{\emph{Automatica}}  \bibinfo{volume}{100}
  (\bibinfo{year}{2019}), \bibinfo{pages}{61--66}.
\newblock


\bibitem[\protect\citeauthoryear{Stella and Bauso}{Stella and Bauso}{2023}]%
        {stella2023impact}
\bibfield{author}{\bibinfo{person}{Leonardo Stella} {and}
  \bibinfo{person}{Dario Bauso}.} \bibinfo{year}{2023}\natexlab{}.
\newblock \showarticletitle{The impact of irrational behaviors in the optional
  prisoner's dilemma with game-environment feedback}.
\newblock \bibinfo{journal}{\emph{International Journal of Robust and Nonlinear
  Control}} \bibinfo{volume}{33}, \bibinfo{number}{9} (\bibinfo{year}{2023}),
  \bibinfo{pages}{5145--5158}.
\newblock


\bibitem[\protect\citeauthoryear{Tilman, Plotkin, and Ak{\c{c}}ay}{Tilman
  et~al\mbox{.}}{2020}]%
        {tilman2020evolutionary}
\bibfield{author}{\bibinfo{person}{Andrew~R Tilman}, \bibinfo{person}{Joshua~B
  Plotkin}, {and} \bibinfo{person}{Erol Ak{\c{c}}ay}.}
  \bibinfo{year}{2020}\natexlab{}.
\newblock \showarticletitle{Evolutionary games with environmental feedbacks}.
\newblock \bibinfo{journal}{\emph{Nature communications}} \bibinfo{volume}{11},
  \bibinfo{number}{1} (\bibinfo{year}{2020}), \bibinfo{pages}{915}.
\newblock


\bibitem[\protect\citeauthoryear{Tuyls, Heytens, Nowe, and Manderick}{Tuyls
  et~al\mbox{.}}{2003a}]%
        {tuyls2003extended}
\bibfield{author}{\bibinfo{person}{Karl Tuyls}, \bibinfo{person}{Dries
  Heytens}, \bibinfo{person}{Ann Nowe}, {and} \bibinfo{person}{Bernard
  Manderick}.} \bibinfo{year}{2003}\natexlab{a}.
\newblock \showarticletitle{Extended replicator dynamics as a key to
  reinforcement learning in multi-agent systems}. In
  \bibinfo{booktitle}{\emph{Machine Learning: ECML 2003: 14th European
  Conference on Machine Learning, Cavtat-Dubrovnik, Croatia, September 22-26,
  2003. Proceedings 14}}. Springer, \bibinfo{pages}{421--431}.
\newblock


\bibitem[\protect\citeauthoryear{Tuyls, Verbeeck, and Lenaerts}{Tuyls
  et~al\mbox{.}}{2003b}]%
        {tuyls2003selection}
\bibfield{author}{\bibinfo{person}{Karl Tuyls}, \bibinfo{person}{Katja
  Verbeeck}, {and} \bibinfo{person}{Tom Lenaerts}.}
  \bibinfo{year}{2003}\natexlab{b}.
\newblock \showarticletitle{A selection-mutation model for q-learning in
  multi-agent systems}. In \bibinfo{booktitle}{\emph{Proceedings of the second
  international joint conference on Autonomous agents and multiagent systems}}.
  \bibinfo{pages}{693--700}.
\newblock


\bibitem[\protect\citeauthoryear{Vinyals, Babuschkin, Czarnecki, Mathieu,
  Dudzik, Chung, Choi, Powell, Ewalds, Georgiev, et~al\mbox{.}}{Vinyals
  et~al\mbox{.}}{2019}]%
        {vinyals2019grandmaster}
\bibfield{author}{\bibinfo{person}{Oriol Vinyals}, \bibinfo{person}{Igor
  Babuschkin}, \bibinfo{person}{Wojciech~M Czarnecki},
  \bibinfo{person}{Micha{\"e}l Mathieu}, \bibinfo{person}{Andrew Dudzik},
  \bibinfo{person}{Junyoung Chung}, \bibinfo{person}{David~H Choi},
  \bibinfo{person}{Richard Powell}, \bibinfo{person}{Timo Ewalds},
  \bibinfo{person}{Petko Georgiev}, {et~al\mbox{.}}}
  \bibinfo{year}{2019}\natexlab{}.
\newblock \showarticletitle{Grandmaster level in StarCraft II using multi-agent
  reinforcement learning}.
\newblock \bibinfo{journal}{\emph{Nature}} \bibinfo{volume}{575},
  \bibinfo{number}{7782} (\bibinfo{year}{2019}), \bibinfo{pages}{350--354}.
\newblock


\bibitem[\protect\citeauthoryear{Viossat and Zapechelnyuk}{Viossat and
  Zapechelnyuk}{2013}]%
        {viossat2013no}
\bibfield{author}{\bibinfo{person}{Yannick Viossat} {and}
  \bibinfo{person}{Andriy Zapechelnyuk}.} \bibinfo{year}{2013}\natexlab{}.
\newblock \showarticletitle{No-regret dynamics and fictitious play}.
\newblock \bibinfo{journal}{\emph{Journal of Economic Theory}}
  \bibinfo{volume}{148}, \bibinfo{number}{2} (\bibinfo{year}{2013}),
  \bibinfo{pages}{825--842}.
\newblock


\bibitem[\protect\citeauthoryear{Vrancx, Tuyls, Westra, and Now{\'e}}{Vrancx
  et~al\mbox{.}}{2008}]%
        {vrancx2008switching}
\bibfield{author}{\bibinfo{person}{Peter Vrancx}, \bibinfo{person}{Karl Tuyls},
  \bibinfo{person}{Ronald~L Westra}, {and} \bibinfo{person}{Ann Now{\'e}}.}
  \bibinfo{year}{2008}\natexlab{}.
\newblock \showarticletitle{Switching dynamics of multi-agent learning.}
\newblock \bibinfo{journal}{\emph{AAMAS (1)}}  \bibinfo{volume}{2008}
  (\bibinfo{year}{2008}), \bibinfo{pages}{307--313}.
\newblock


\bibitem[\protect\citeauthoryear{Weibull}{Weibull}{1997}]%
        {weibull1997evolutionary}
\bibfield{author}{\bibinfo{person}{J{\"o}rgen~W Weibull}.}
  \bibinfo{year}{1997}\natexlab{}.
\newblock \bibinfo{booktitle}{\emph{Evolutionary game theory}}.
\newblock \bibinfo{publisher}{MIT press}.
\newblock


\bibitem[\protect\citeauthoryear{Weitz, Eksin, Paarporn, Brown, and
  Ratcliff}{Weitz et~al\mbox{.}}{2016}]%
        {weitz2016oscillating}
\bibfield{author}{\bibinfo{person}{Joshua~S Weitz}, \bibinfo{person}{Ceyhun
  Eksin}, \bibinfo{person}{Keith Paarporn}, \bibinfo{person}{Sam~P Brown},
  {and} \bibinfo{person}{William~C Ratcliff}.} \bibinfo{year}{2016}\natexlab{}.
\newblock \showarticletitle{An oscillating tragedy of the commons in replicator
  dynamics with game-environment feedback}.
\newblock \bibinfo{journal}{\emph{Proceedings of the National Academy of
  Sciences}} \bibinfo{volume}{113}, \bibinfo{number}{47}
  (\bibinfo{year}{2016}), \bibinfo{pages}{E7518--E7525}.
\newblock


\bibitem[\protect\citeauthoryear{Yang and Wang}{Yang and Wang}{2020}]%
        {yang2020overview}
\bibfield{author}{\bibinfo{person}{Yaodong Yang} {and} \bibinfo{person}{Jun
  Wang}.} \bibinfo{year}{2020}\natexlab{}.
\newblock \showarticletitle{An overview of multi-agent reinforcement learning
  from game theoretical perspective}.
\newblock \bibinfo{journal}{\emph{arXiv preprint arXiv:2011.00583}}
  (\bibinfo{year}{2020}).
\newblock


\bibitem[\protect\citeauthoryear{Zhang, Yang, and Ba{\c{s}}ar}{Zhang
  et~al\mbox{.}}{2021}]%
        {zhang2021multi}
\bibfield{author}{\bibinfo{person}{Kaiqing Zhang}, \bibinfo{person}{Zhuoran
  Yang}, {and} \bibinfo{person}{Tamer Ba{\c{s}}ar}.}
  \bibinfo{year}{2021}\natexlab{}.
\newblock \showarticletitle{Multi-agent reinforcement learning: A selective
  overview of theories and algorithms}.
\newblock \bibinfo{journal}{\emph{Handbook of reinforcement learning and
  control}} (\bibinfo{year}{2021}), \bibinfo{pages}{321--384}.
\newblock


\bibitem[\protect\citeauthoryear{Zhang, Gupta, Suprabhat, and Stella}{Zhang
  et~al\mbox{.}}{2023}]%
        {zhang2023multi}
\bibfield{author}{\bibinfo{person}{Tuo Zhang}, \bibinfo{person}{Harsh Gupta},
  \bibinfo{person}{Kumar Suprabhat}, {and} \bibinfo{person}{Leonardo Stella}.}
  \bibinfo{year}{2023}\natexlab{}.
\newblock \showarticletitle{A multi-agent reinforcement learning approach to
  promote cooperation in evolutionary games on networks with environmental
  feedback}. In \bibinfo{booktitle}{\emph{2023 62nd IEEE Conference on Decision
  and Control (CDC)}}. IEEE, \bibinfo{pages}{2196--2201}.
\newblock


\end{thebibliography}

\end{document}